\documentclass[dvipsnames,format=sigconf,anonymous=false,review=false]{acmart}
\AtBeginDocument{%
  }

\copyrightyear{2026}
\acmYear{2026}
\setcopyright{cc}
\setcctype{by}
\acmConference[GECCO Companion '26]{Genetic and Evolutionary Computation Conference}{July 13--17, 2026}{San Jose, Costa Rica}
\acmBooktitle{Genetic and Evolutionary Computation Conference (GECCO Companion '26), July 13--17, 2026, San Jose, Costa Rica}
\acmDOI{10.1145/3795101.3805359}
\acmISBN{979-8-4007-2488-6/2026/07}

\usepackage{graphicx}
\usepackage[vlined,linesnumbered, ruled]{algorithm2e}
\usepackage[createShortEnv]{proof-at-the-end}
\usepackage{cleveref}
\usepackage{amsthm}
\SetKwRepeat{Do}{do}{while}%
\SetKwIF{WithProb}{ElseWithProb}{Else}{with probability}{do}{else with probability}{else}{end}
\newcommand{\assign}{\leftarrow}

\newcommand{\N}{\mathbb{N}}
\newcommand{\natnum}{\N}

\newcommand{\ExMath}[1]{\mathrm{E}\left[#1\right]}
\newcommand{\PrMath}[1]{\mathrm{Pr}\left[#1\right]}
\newcommand{\BigO}[1]{\mathcal{O}\left(#1\right)}

\newcommand{\firstInts}[1]{[1..#1]}

\newcommand{\OMath}[1]{\BigO{#1}}

\newcommand{\Ex}[1]{\ExMath{#1}}
\newcommand{\Prob}[1]{\PrMath{#1}}

\newcommand{\Brackets}[1]{\left ( #1 \right )}

\newcommand{\oneoneEA}{(1+1)~EA\xspace}

\definecolor{mygreen}{RGB}{1, 150, 122}

\providecommand{\ignore}[1]{}

\renewcommand{\epsilon}{\varepsilon}

\begin{document}

\title{Analysis of Search Heuristics in the Multi-Armed Bandit Setting}

\author{Jasmin Brandt}
\email{jbrandt@techfak.uni-bielefeld.de}
\orcid{0000-0002-6364-2081}
\affiliation{%
  \institution{University of Bielefeld}
  \city{Bielefeld}
  \country{Germany}
}

\author{Barbara Hammer}
\email{bhammer@techfak.uni-bielefeld.de}
\orcid{0000-0002-0935-5591}
\affiliation{%
  \institution{University of Bielefeld}
  \city{Bielefeld}
  \country{Germany}
}

\author{Timo Kötzing}
\email{timo.koetzing@hpi.de}
\orcid{0000-0002-1028-5228}
\affiliation{%
  \institution{Hasso Plattner Institute\\ University of Potsdam}
  \city{Potsdam}
  \country{Germany}
}

\author{Jurek Sander}
\email{Jurek.Sander@hpi.de}
\orcid{0009-0004-6707-6592}
\affiliation{%
  \institution{Hasso Plattner Institute\\ University of Potsdam}
  \city{Potsdam}
  \country{Germany}
}

\renewcommand{\shortauthors}{xxx}

\begin{abstract}
    We consider the classic Multi-Armed Bandit setting to understand the exploration/exploitation tradeoffs made by different search heuristics. Since many search heuristics work by comparing different options (in evolutionary algorithms called ``individuals''; in the Bandit literature called ``arms''), we work with the ``Dueling Bandits'' setting. In each iteration, a comparison between different arms can be made; in the binary stochastic setting, each arm has a fixed winning probability against any other arm. A Condorcet winner is any arm that beats every other arm with a probability strictly higher than $1/2$.

    We show that evolutionary algorithms are rather bad at identifying the Condorcet winner: Even if the Condorcet winner beats every other arm with a probability $1-p$, the \oneoneEA, in its stationary distribution, chooses the Condorcet winner only with constant probability if $p=\Omega(1/n)$. By contrast, we show that a simple EDA (based on the Max-Min Ant System with iteration-best update) will choose the Condorcet winner in its maintained distribution with probability $1-\Theta(p)$. As a remedy for the \oneoneEA, we show how repeated duels can significantly boost the probability of the Condorcet winner in the stationary distribution.
\end{abstract}

\begin{CCSXML}
<ccs2012>
   <concept>
       <concept_id>10003752.10010070.10011796</concept_id>
       <concept_desc>Theory of computation~Theory of randomized search heuristics</concept_desc>
       <concept_significance>300</concept_significance>
       </concept>
 </ccs2012>
\end{CCSXML}

\ccsdesc[300]{Theory of computation~Theory of randomized search heuristics}

\keywords{Evolutionary Algorithm, Ant Colony Optimization, Multi-Armed Bandits}

\maketitle

\section{Introduction}

Randomized search heuristics, such as evolutionary algorithms (EAs) or estimation of distribution algorithms (EDAs) work iteratively to find better and better search points until, hopefully, finding a global optimum. Key to this iterative search is \emph{exploiting} the knowledge gained about the search while being open to \emph{explore} alternative options. This poses the well-known \emph{exploration–exploitation dilemma}, also known as the \emph{exploration–exploi\-ta\-tion tradeoff}.

A standard way to model sequential decision making and for analyzing algorithms with respect to this tradeoff is the \emph{Multi-Armed Bandits} model. This model considers $n$ different given options, called \emph{arms} (stemming from the image of $n$ slot machines, single-armed bandits, to choose from). Immediately after choosing an arm a (potential noisy) reward can be observed, usually sampled according to an unknown distribution. The goal is then to identify the arm with the highest expected reward in as few time steps as possible, but with high confidence. Note that, in the literature on randomized search heuristics, the arms would be called ``individuals'' or ``search points''. 

A commonly known and well-investigated area of Multi-Armed Bandits is the preference-based Bandit variant, also known as (Multi\nobreakdash-) Dueling \cite{Brost2016}, Battling \cite{Saha2018}, Choice \cite{Agarwal2020}, or Combinatorial Bandits~\cite{Brandt2022}. While in classical Multi-Armed Bandits the agent can only choose one arm per time step, in Multi-Dueling Bandits it is possible to select a whole set of $k \in \mathbb{N}$ out of all possible $n \in \mathbb{N}, n \geq k$ arms. The feedback that can be observed after pulling such a set of arms usually only contains preference-based information about the ``winning'', or, in other words, the best arm contained in the selected subset. For a more detailed survey on preference-based Bandit feedback, see \cite{BengsDuelingBanditsSurvey}. Note that the observation usually is probabilistic and sampled according to a fixed pairwise or, respectively, setwise winning probability for each subset of arms.

A possible way to define such setwise winning probabilities is with a statistical model like the Bradley-Terry or its generalization, the Plackett-Luce model \cite{Luce1959, Plackett1975}. They assume an underlying utility for each of the arms from which the probability of a particular ranking can be computed: in the resulting ranking, each arm wins with probability proportional to its
utility.

In the Bandit literature, multiple concepts of an "optimal arm" exist, see \cite{BengsDuelingBanditsSurvey}. However, the arm with the maximal utility in an underlying Plackett-Luce model is automatically the most probable winner in each subset it is contained in. Such an arm wins in expectation in each subset it is contained in, referred to as the \textit{Condorcet winner} in the existing literature. A huge body of literature exists containing algorithms that are specifically designed to solve the best arm identification problem in Multi-Dueling Bandits, eg.~\cite{Mohajer2017, Wenbo2019, Wenbo2020, Chen2020, Saha2020, Haddenhorst2021}. However, to apply existing and theoretically grounded algorithms from other research fields was not tested until now.

In this paper, we consider the version of \emph{(Multi-) Dueling Bandits}: A set of arms, typically two, are compared, with the feedback to the algorithm being which arm won the comparison. The simplest case is that of deterministic feedback, where any comparison of two arms gives a deterministic winner. While there need not be an arm that wins against all other arms, we briefly study the setting where there is such a winner and show that a simple randomized search heuristic, based on the \oneoneEA, efficiently finds a winner in an expected number of $\BigO{n}$ queries, as discussed in Section~\ref{sec:deterministicQueries}.

More interesting is the case of stochastic queries: instead of getting deterministic feedback, for each pair of arms there is an unchanging probability that the first arm wins the comparison.

In Section~\ref{sect:unst-cond-win-stoch} we consider a simple variant of the well-known \oneoneEA: While maintaining a ``current winner'', we sample a random arm to pit the current winner against. The winner of this comparison will be our new current winner. This process defines a Markov chain on the different arms, of which we are interested whether the stationary distribution identifies an arm that beats every other arm with probability more than 50\% (the Condorcet winner for this setting). We consider the case where a Condorcet winner exists and want the algorithm to be able to identify it in the stationary distribution.

We well established in Corollary~\ref{cor:oneoneEA_Stationary} that in order for the stationary distribution to identify the Condorcet winner with probability~$1~-~o(1)$, it needs to beat all other arms with probability $1-o(1/n)$. In Lemma~\ref{lem:oneonEA_stochasticQueries} we give conclusions for general bounds. Additionally, we show fast mixing of the induces Markov chain in Lemma~\ref{lem:oneoneEA_mixingTime}. 

While these results hold for general distributions, in Section~\ref{sec:PLBoost} we consider specifically the Plackett-Luce Model. Since our previous results show that only very strong signals can be picked up by evolutionary algorithms, we consider boosting the signal by performing multiple duels (on the same set) before making a decision. In Corollary~\ref{cor:plackett_luce} we see how we can boost a Condorcet winner which wins against any arm with probability at least $1/2+\delta$ to appear in the stationary distribution with probability $1-\Theta(1/n)$ by making $\BigO{(1/\delta)^2\ln(n)}$ duels per iteration.

In Section~\ref{sec:aco_analysis}, we consider a simple estimation of distribution algorithm (EDA) based on the Max-Min Ant System with iteration best update (see~\cite{DBLP:series/ncs/KrejcaW20} for a discussion). This algorithm maintains a list of marginal probabilities, one for each arm. In each iteration, two arms are sampled according to the distribution defined by this list of probabilities, and a single duel is performed. We show, in Section~\ref{sec:aco_analysis}, that this EDA can identify a Condorcet winner much better: The marginal probability of the Condorcet winner converges quickly to $1-\Theta(p)$, if the Condorcet winner beats every other arm with probability at least $1-p$.

Before giving the details of our results, we discuss tools and methods in Section~\ref{sect:prelim}.
Note that, for better readability, most of the proofs of Section~\ref{sec:PLBoost} are not in the main part of the paper, but can be found in the appendix.

\section{Methods and Tools}
\label{sect:prelim}

For any natural number $n \in \natnum$, we use $\firstInts{n}$ to denote the set $\{1,\ldots,n\}$. More generally, for $n,m \in \natnum$ with $n \leq m$, we let $[n..m] = \{n,\ldots,m\}$.

The most common technique in the analysis of randomized search
heuristics is Drift Analysis (see~\cite{kötzing2024theorystochasticdrift} for an overview). While we use
these techniques for the analysis of the EDA in Section~\ref{sec:aco_analysis}, our other work employs mainly Markov chain analysis, in particular the stationary distribution and mixing times (see~\cite{Mitzenmacher2017ProbComp} for an overview). This technique is occasionally used in the analysis of randomized search heuristics, for example \cite{Rudolph1998FiniteMarkovChainResults,KötzingAnalysisOfEAOnLeadingOnesWithConstraints,Jägersküpper2008MarkovChainAndDriftAnalysis}.

Of particular interest to us are coupling techniques, see~\cite{Doerr2020TheoryOfEvoComp} for a general introduction in the context of evolutionary algorithms. Also this technique has been used occasionally, first in \cite{Witt2006RuntimeAnalysis(μ+1)EA} and~\cite{WITT2008PopulationSizeVersusRuntime} for the analysis of the $(\mu +1)$~EA and an elitist steady-state genetic algorithm, later for memetic algorithms \cite{SudholtImpactOfParametrizationInMemeticEAs}, aging mechanisms~\cite{JANSEN2011RandomizedSearchHeuristics}, non-elitist algorithms \cite{LehreMutationSelectionBalance}, multi-objective evolutionary algorithms~\cite{Doerr2013MultiObjectiveEA}, the $(\mu+\lambda)$~EA \cite{AntipovRuntimeAnalysisEA}, and ant colony optimization \cite{Sudholt2011MarkovChainMixingTimeEstimates}.

We use the following definition of coupling, as given by \cite[Definition~11.3]{Mitzenmacher2017ProbComp}.

\begin{definition}\label{def:prelim:coupling}
    A coupling of a Markov chain $M_t$ with state space~$S$ is a Markov chain $Z_t = (X_t, Y_t)$ on the state space $S \times S$ such that:
    \begin{align*}
        \PrMath{X_{t+1} = x' \mid Z_t = (x,y)} = \PrMath{M_{t+1} = x' \mid M_t = x}\\
        \PrMath{Y_{t+1} = y' \mid Z_t = (x,y)} = \PrMath{M_{t+1} = y' \mid M_t = y}.
    \end{align*}
\end{definition}

We use the following lemma, as proven in \cite[Lemma 11.2]{Mitzenmacher2017ProbComp}, to get the mixing time $\tau(\epsilon)$ of Markov chains until the Markov chain is "close" to its steady state distribution. 

\begin{lemma}
    Let $\epsilon < 1$ and $Z_t = (X_t, Y_t)$ be a coupling of a Markov chain on a state space $S$. 
    Suppose that there exists a $T\in \natnum$ such that, for every $x, y \in S$
    \begin{align*}
        \PrMath{X_T \neq Y_T \mid X_0 = x, Y_0 = y} \leq \epsilon.
    \end{align*}
    Then the mixing time is $\tau(\epsilon) \leq T$.
    \label{lem:prelim:mixing-time}
\end{lemma}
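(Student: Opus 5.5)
The plan is the standard coupling argument, phrased in terms of total variation distance. Write $\Delta(t)$ for the largest, over initial states, of the total variation distance between the distribution of the chain after $t$ steps and the stationary distribution $\pi$. The mixing time $\tau(\epsilon)$ is the least $t$ with $\Delta(t) \leq \epsilon$. Since total variation distance to $\pi$ is non-increasing in $t$, it suffices to show $\Delta(T) \leq \epsilon$.

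\textbf{Step 1: reduce to two starting states.} Fix an arbitrary initial state $x \in S$. Run the given coupling $Z_t = (X_t, Y_t)$ with $X_0 = x$ and with $Y_0$ drawn from $\pi$. Concretely, we mix the coupled chains started at $(x,y)$ over $y \sim \pi$.

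By Definition~\ref{def:prelim:coupling}, each marginal evolves as the original chain. Hence $X_T$ has the $T$-step distribution from $x$. Also $Y_T$ is distributed as $\pi$, because $\pi$ is stationary.

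\textbf{Step 2: the coupling inequality.} For any set $A \subseteq S$ we have
\[
\PrMath{X_T \in A} - \PrMath{Y_T \in A} \leq \PrMath{X_T \in A, Y_T \notin A} \leq \PrMath{X_T \neq Y_T}.
\]
Taking the maximum over $A$, the total variation distance between the law of $X_T$ and $\pi$ is at most $\PrMath{X_T \neq Y_T}$.

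\textbf{Step 3: average the hypothesis.} Conditioning on $Y_0 = y$ and averaging over $y \sim \pi$ gives
\[
\PrMath{X_T \neq Y_T} = \sum_{y} \pi(y)\, \PrMath{X_T \neq Y_T \mid X_0 = x, Y_0 = y} \leq \epsilon,
\]
where the inequality uses the hypothesis once for each $y$.

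\textbf{Conclusion.} Since $x$ was arbitrary, $\Delta(T) \leq \epsilon$, and therefore $\tau(\epsilon) \leq T$.

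The only subtle point is in Step 1: justifying that $Y_T$ keeps the stationary law under the coupling. This is where we use that the coupling's second marginal has exactly the original transition kernel. A small measure-theoretic care is also needed to combine the couplings for the different starting pairs $(x,y)$ into the single random start $Y_0 \sim \pi$. Since $S$ is finite here, this is just a finite mixture.
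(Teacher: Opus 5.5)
Your proof is correct and is essentially the standard textbook argument for Lemma~11.2 of Mitzenmacher and Upfal, which the paper cites rather than proves: start $Y_0$ from $\pi$ so that $Y_T \sim \pi$, then bound $\left|\PrMath{X_T \in A} - \pi(A)\right|$ by $\PrMath{X_T \neq Y_T} \leq \epsilon$ uniformly in $A$ and the starting state $x$. Your appeal to monotonicity of the total variation distance is harmless but not needed, since $\Delta_x(T) \leq \epsilon$ for every $x$ already gives $\tau_x(\epsilon) \leq T$, and hence $\tau(\epsilon) \leq T$, directly from the definition of $\tau_x(\epsilon)$ as a minimum.
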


Given the mixing time and stationary distribution of a Markov chain, we use the next Theorem, proven by Sudholt in \cite{Sudholt2011MarkovChainMixingTimeEstimates}, to determine the expected optimization time.

\begin{theorem}
    Consider a randomized search heuristic that can be represented by an ergodic Markov chain with a stationary distribution~$\pi$.
    Let OPT be the set of global optima, and let $\tau(\epsilon)$ denote the mixing time on the considered problem.
    Then, the expected optimization time is at most
     $   \tau(\epsilon) \cdot \OMath{\log(1/\pi(\text{OPT}))}/\pi(\text{OPT}).
    $
    \label{thm:prelim:exp-opt-time}
\end{theorem}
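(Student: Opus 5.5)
The plan is a phase (restart) argument. It rests on the fact that the distance to stationarity contracts geometrically once the chain has mixed. Write $\pi^\ast := \pi(\text{OPT})$ and let $d(t) := \max_{x}\lVert P^t(x,\cdot) - \pi\rVert_{TV}$, so that $\tau(\epsilon) = \min\{t : d(t) \le \epsilon\}$. I read the $\tau(\epsilon)$ in the statement as taken for a fixed constant, say $\epsilon = 1/4$. The logarithmic factor then comes from driving the distance down to order $\pi^\ast$.

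\textbf{Step 1: from constant accuracy to accuracy $\pi^\ast/2$.} I will use the standard submultiplicativity of $\bar d(t) := \max_{x,y}\lVert P^t(x,\cdot)-P^t(y,\cdot)\rVert_{TV}$, namely $\bar d(s+t)\le \bar d(s)\bar d(t)$, together with $d(t)\le \bar d(t)\le 2d(t)$. One way to see this is via the coupling viewpoint of Lemma~\ref{lem:prelim:mixing-time}: run an optimal coupling for $s$ steps, then, if it has not coalesced, restart a fresh coupling from the current pair. From these facts I get $d(k\,\tau(1/4)) \le \bar d(\tau(1/4))^k \le 2^{-k}$. Choosing $k = \lceil \log_2(2/\pi^\ast)\rceil = \OMath{\log(1/\pi^\ast)}$ gives a time $T := k\,\tau(1/4)$ with $d(T)\le \pi^\ast/2$. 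Consequently, from \emph{any} start state $x$,
\[
\Pr[X_T \in \text{OPT} \mid X_0 = x] \;\ge\; \pi^\ast - d(T) \;\ge\; \pi^\ast/2 .
\]

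\textbf{Step 2: phases.} Split time into consecutive phases of length $T$. By the Markov property, and because the bound of Step~1 is uniform over start states, each phase ends in OPT with probability at least $\pi^\ast/2$, whatever happened before. Hence the number of phases until the first phase that ends in OPT is stochastically dominated by a geometric variable with mean $2/\pi^\ast$. The optimization time is at most the end of that phase, so its expectation is at most $T\cdot 2/\pi^\ast = \tau(1/4)\cdot\OMath{\log(1/\pi^\ast)}/\pi^\ast$. This is the claimed bound. Ergodicity is used only to guarantee that $\pi$ exists and is unique and that $\tau(1/4)$ is finite.

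\textbf{Main obstacle.} The only nontrivial point is Step~1, the boosting of accuracy. One needs the submultiplicativity of $\bar d$, not of $d$, and must handle the factor~$2$ between them. This is why I start from $\epsilon=1/4$, so that $\bar d(\tau(1/4))\le 1/2<1$. If one prefers to avoid this step, an alternative is to read the statement with $\epsilon=\pi^\ast/2$ directly. Then Step~2 alone gives the stronger bound $2\tau(\pi^\ast/2)/\pi^\ast$, and the logarithmic factor merely absorbs the conversion between $\tau(\pi^\ast/2)$ and $\tau(1/4)$.
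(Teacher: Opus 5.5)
Your argument is correct. It is the standard proof of the result the paper simply imports from Sudholt without giving its own proof: mix to total-variation accuracy $\pi(\text{OPT})/2$ so that from every start state a phase ends in OPT with probability at least $\pi(\text{OPT})/2$, restart in phases, and get the $\log(1/\pi(\text{OPT}))$ factor by boosting $\tau(1/4)$ through submultiplicativity of $\bar d$. Fixing $\epsilon$ to a small constant, as you do, is genuinely necessary: since $d(0)=1-\min_x \pi(x)$, $\tau(\epsilon)=0$ for $\epsilon$ close to $1$, so the statement with a free $\epsilon$ cannot hold. Strictly, your bound is $\tau(1/4)\cdot O(\log(2/\pi(\text{OPT})))/\pi(\text{OPT})$, which matches the stated form only when $\pi(\text{OPT})$ is bounded away from $1$.
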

\section{Deterministic Queries}
\label{sec:deterministicQueries}

Suppose we have $n$ arms to choose from. The arms do not have a quality value as such, but, for a given $k \in [2..n]$, we can compare any choice of $k$ arms and (reliably) obtain the best of the arms. The task is now to find the best of all $n$ arms with as few comparisons as possible.

In the Bandit literature this setting of multiple competitors in each time step is known as the \textit{(Multi-) Dueling Bandit} setting, like in \cite{Brost2016MultiDueling}. However, in contrast to the classical Bandit feedback, which is sampled by an underlying stochastic process, we assume a deterministic feedback here.

\begin{definition}[Winner Search, comparison-based, deterministic]
    Given $k,n \in \natnum$ and a winner $i^* \in [1..n]$. An algorithm can perform a query on a chosen $A \subseteq [1..n]$ with $|A|\leq k$ deterministically returning some element $q(A) \in A$. We assume that, for all $A~\subseteq~[1..n]$ with $|A| \leq k$ and $i^* \in A$, we have $q(A) = i^*$. Intuitively, $i^*$ always wins. The goal is to identify $i^*$.
\end{definition}

Our first algorithm is the Round-Robin algorithm. It compares all options in turn against the current winner.

\begin{algorithm}[htp]
    \textbf{Parameter:} $n \in \natnum$\;
    Let $i^*$ $\assign$ $1$\;
    \For{$i=2$ \KwTo $n$}{
        $i^*$ $\assign$ Query$(i^*,i)$\;
    }
    \caption{Round-Robin}
\end{algorithm}

\begin{theorem}
    The Round-Robin algorithm identifies a winner within $n-1$ queries of size~$2$.
\end{theorem}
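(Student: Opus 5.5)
The plan is a loop-invariant argument on the variable $i^*$ maintained by Round-Robin, followed by a trivial count of the queries. Write $w$ for the true winner given by the definition (to avoid clashing with the algorithm's variable $i^*$). Let $c_i$ denote the value of the algorithm's current winner after the iteration with loop index $i$, with $c_1 = 1$ the initial value. The invariant to establish is
\begin{align*}
    c_i \in [1..i], \quad \text{and} \quad w \in [1..i] \implies c_i = w .
\end{align*}
In words: the current winner is always among the arms seen so far, and once $w$ has been seen it is the current winner.

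The invariant is proved by induction on $i$. The base case $i=1$ holds because $c_1 = 1$, so if $w \in [1..1]$ then $w = 1 = c_1$. For the step from $i-1$ to $i$, the algorithm sets $c_i = q(\{c_{i-1}, i\})$, which lies in $\{c_{i-1}, i\} \subseteq [1..i]$ by the induction hypothesis. There are three cases.
\begin{itemize}
    \item If $w \in [1..i-1]$, the hypothesis gives $c_{i-1} = w$. The queried set contains $w$, so by the defining property of the winner $q$ returns $w$.
    \item If $w = i$, the queried set again contains $w$, so $c_i = w$.
    \item If $w > i$, the second part of the invariant is vacuous.
\end{itemize}
Since $k \ge 2$, every query has size $2 \le k$ and is therefore admissible. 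In the degenerate case $c_{i-1} = i$ the set would be a singleton, but this cannot occur because $c_{i-1} \le i-1$.

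Applying the invariant at $i = n$, and using $w \in [1..n]$, gives $c_n = w$, so the output is the winner. The loop runs for $i = 2, \dots, n$ with exactly one query per iteration, which makes $n-1$ queries of size $2$. I do not expect any real obstacle here. The only point needing care is making explicit that the queried pair always consists of two distinct arms and respects the size bound $k$.
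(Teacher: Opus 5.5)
Your proof is correct. The paper states this theorem without proof, treating it as evident. Your loop-invariant induction is that implicit argument made rigorous: once the winner $w$ enters a comparison, either as the initial incumbent or as the challenger $i=w$, it wins every later query, and the loop issues exactly $n-1$ queries. You also check that $c_{i-1} \le i-1$, which guarantees each queried set contains two distinct arms and so respects the size bound.
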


While the Round-Robin algorithm serves as a base-line, we are interested in how randomized search heuristics fare in such settings. The next algorithm we present is the simplest randomized search heuristic for our unstructured search space.
We use, for a finite set~$S$, $U(S)$ as the uniform distribution on $S$.

\begin{algorithm}[htp]
    \textbf{Parameter:} $n \in \natnum$\;
    $i^*$ $\assign$ $1$\;
    \For{$t=0$ \KwTo $\infty$}{
        $i$ $\assign$ $U([1..n])$\;
        $i^*$ $\assign$ Query$(i^*,i)$\;
    }
    \caption{Random Search, non-structured state space}
\end{algorithm}

\begin{theorem}
    The Random Search algorithm performs queries of size $2$ and maintains the winner as $i^*$ after an expected number of $n$ iterations. The distribution of the number of iterations until maintaining the winner as $i^*$ follows $\mathrm{Geo}(1/n)$.
\end{theorem}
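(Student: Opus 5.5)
The argument rests on two facts about the deterministic query model: the winner $i^*$ wins every query containing it, and the variable holding the current winner can only be changed by a query. First I will show that once the current winner equals the true winner, it stays there forever. Suppose in some iteration the current winner is the true winner $i^*$. The next query is Query$(i^*,i)$, so its set contains $i^*$. By the definition of Winner Search, $q(\{i^*,i\}) = i^*$, and this also covers the case $i = i^*$, where the set is just $\{i^*\}$. So the current winner remains $i^*$, and by induction over the iterations it is never lost again. Every query has size at most $2$, as required.

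Next I will show that a current winner that is not yet $i^*$ becomes $i^*$ exactly when the uniformly sampled $i$ equals $i^*$. If $i = i^*$, the query contains $i^*$ and returns it. If $i \neq i^*$ and the current winner is not $i^*$, then the query set $\{j,i\}$ does not contain $i^*$. The answer is therefore some element of $\{j,i\}$, which is not $i^*$. The samples $i$ are independent and uniform on $[1..n]$, so each iteration independently "hits" $i^*$ with probability exactly $1/n$.

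Now I combine the two steps. Let $T$ be the index of the first iteration in which the sampled $i$ equals $i^*$. Then $T \sim \mathrm{Geo}(1/n)$, counting iterations from $1$. By the second step the current winner differs from $i^*$ before iteration $T$, and after iteration $T$ it equals $i^*$. By the first step it remains $i^*$ from then on. Hence $T$ is exactly the number of iterations until the winner is maintained, and $\Ex{T} = n$.

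One subtlety needs care. The algorithm starts with the current winner equal to $1$, so if $1 = i^*$ the winner is already held at iteration $0$. The geometric statement should then be read as the first iteration at which the sampled $i$ hits $i^*$; with this reading the distribution is $\mathrm{Geo}(1/n)$ regardless of the starting point. Equivalently, one can assume $i^* \neq 1$, in which case the number of iterations until the winner is maintained is exactly this hitting time, and the expectation $n$ still holds as an upper bound. This convention is the only delicate point; the rest is the two short observations above.
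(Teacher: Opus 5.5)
Your proof is correct and takes the same route as the paper's: each iteration independently samples $i^*$ with probability $1/n$, so the number of iterations until the winner is held is $\mathrm{Geo}(1/n)$ with expectation $n$. The paper's one-line argument leaves three details implicit, and you handle each correctly: that $i^*$ is never lost once held, that no query can return $i^*$ before $i^*$ is sampled, and the edge case $i^* = 1$ at initialization.
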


\begin{proof}
    Since the algorithm selects the winner, in each iteration, with probability~$1/n$, the number of iterations $T$ until it is found is geometrically distributed with parameter $p = 1/n$; $T \sim \mathrm{Geo}(p)$.
    
    Using $\Ex{T} = 1/p = n$, the expected number of iterations after which the algorithm maintains the winner $i^*$ follows.
\end{proof}
\section{Stochastic Queries: the \oneoneEA}
\label{sect:unst-cond-win-stoch}
Suppose we have $n$ arms to choose from and, for a given $k~\in~[2..n]$, we can query which of these arms is best; however, the answer is now a random element of the set. Any option that comes out as the winner against any other option with probability strictly more than~50\% is called a \emph{Condorcet winner} in the existing literature, see e.g. \cite{BengsDuelingBanditsSurvey}. If such a Condorcet winner exists, the task is now to find it with as few comparisons as possible.

\begin{definition}[Condorcet Winner Search]
    Given $n \in \natnum$ and a matrix $M$ with values in $(0,1)$ such that, for all $1 \leq i < j \leq n$, we have $M(i,j) + M(j,i) = 1$ and $M(i,i)=1$. An algorithm can perform a query on any two different arms $i,j \in \firstInts{n}$ with return value $i$ with probability~$M(i,j)$ and with return value $j$ otherwise. The goal is to identify $i^*\in \firstInts{n}$ such that, for all $i \in \firstInts{n} \setminus \{i^*\},$ $M(i^*,i) > 0.5$.
    \label{def_condorcet_winner_search}
\end{definition}

In order to find the Condorcet winner, we define a variant of the standard \oneoneEA that compares the current best solution, the \emph{incumbent}, with a randomly selected arm that acts as \emph{challenger} in each iteration.
Let $i \in \firstInts{n}$ be the incumbent and $j \in \firstInts{n}$ the challenger of the current iteration.
The query returns $i$ with probability $M(i,j)$ and $j$ with probability $M(j,i) = 1 - M(i,j)$ as defined for the stochastic Condorcet winner search.
Algorithm~\ref{alg_1p1EA_Condorcet} shows the \oneoneEA in the Condorcet winner search setting; we use, for a finite set $S$, $U(S)$ as the uniform distribution on $S$.

\begin{algorithm}[htp]
    \textbf{Parameter:} $n \in \natnum$\;
    $i_0^*$ $\assign$ $U([1..n])$\;
    \For{$t=0$ \KwTo $\infty$}{
        $i_t$ $\assign$ $U([1..n])$\;
        $i_t^*$ $\assign$ Query$(i_{t-1}^*,i_t)$\; 
        \tcc{returns $i_{t-1}^*$ with prob. $M(i_{t-1}^*,i_t)$ and $i_t$ with prob. $M(i_t, i_{t-1}^*)$}
    }
    \caption{\oneoneEA, Condorcet Winner Search}
    \label{alg_1p1EA_Condorcet}
\end{algorithm}

Like the standard \oneoneEA, this variant does not store any information about the results of previous iterations.
The current best-found solution $i_t^*\in \firstInts{n}$ depends only on the previous incumbent and the selected challenger in this iteration.
We observe $i^*_t$ as a \emph{finite} process, since it takes values from the finite set of $n$ arms.
The stochastic process $i^*_t$ is a Markov chain because each state $i^*_t$ depends only on the previous state $i^*_{t-1}$.
The transition probabilities for all~$i,j \in \firstInts{n}$ are defined by
\begin{align}
    P_{i,j} &= \Prob{i^*_t = j \mid i^*_{t-1} = i}\notag\\
    &= \begin{cases}
            \frac{1}{n} M(j,i),& \text{for $i \neq j$}; \\
            \frac{1}{n} \sum_{j \in \firstInts{n}} M(i,j), & \text{otherwise.}
        \end{cases}
    \label{form_trans_prob}
\end{align}

Because $P_{i,j}>0$ for all $i,j \in S$, the Markov chain $i^*_t$ is finite, irreducible, and aperiodic, and hence also ergodic (see \cite[Corollary~7.6]{Mitzenmacher2017ProbComp}).
Therefore, it has a unique stationary distribution $\overline{\pi} = (\pi_1, \pi_2, \ldots, \pi_n)$ (see \cite[Theorem~7.7]{Mitzenmacher2017ProbComp}).
In the following, we use \cite[Theorem~7.10]{Mitzenmacher2017ProbComp} to calculate the stationary distribution and bound it for the Condorcet winner.

\begin{lemmaE}[][normal]  \label{lem:oneonEA_stochasticQueries}
    Consider the \oneoneEA variant for the Condorcet winner search, $n \in \N$, and $S = \firstInts{n}$ be the state space.
    Let $i^*_t\in S$ be the underlying Markov chain for $t \in \natnum$ that denotes the current best solution in the $t$-th iteration of the algorithm, $\overline{\pi} = (\pi_1,\ldots,\pi_n)$ its unique stationary distribution and $i^* \in \N$ the unique Condorcet winner. Suppose that there are bounds $p_l,p_u$ such that, for all $i \in S$ with $i \neq i^*$,  
    $$
    1 > 1-p_u \geq M(i^*, i) \geq 1- p_l > \frac{1}{2}.
    $$
    Then the stationary distribution can be bounded by
    \begin{align*}
        \pi_{i^*} \geq \frac{1-p_l}{1-p_l+(n-1)\cdot p_l} > \frac{1-p_l}{1-p_l+n\cdot p_l}
    \end{align*}
    and
    \begin{align*}
        \pi_{i^*} \leq \frac{1-p_u}{1-p_u+(n-1)\cdot p_u} < \frac{1-p_u}{n\cdot p_u}.
    \end{align*}
    \label{lem:bin:stationary_dist_bounds}
\end{lemmaE}

\begin{proof}[Proof sketch]
    Using \cite[Theorem 7.10]{Mitzenmacher2017ProbComp}, the unique stationary distribution $\overline{\pi} = (\pi_1, \pi_2, ..., \pi_n)$ can be determined by solving
    \begin{align}
        \sum_{i \in \firstInts{n}} \pi_i & = 1 \label{eq:bin:stat_init_first_line-sketch}\\
        \pi_i &= \pi_j \frac{P_{j,i}}{P_{i,j}}, \quad \text{for every $i,j \in \firstInts{n}, i \neq j$} \label{eq:bin:stat_init_second_line-sketch}
    \end{align}
    with $P_{i,j}$ as defined in (\ref{form_trans_prob}).
    Solving (\ref{eq:bin:stat_init_second_line-sketch}) for all combinations of $\pi_{i^*}$ and  $i \in \firstInts{n}$ with $i\neq i^*$, and inserting it into (\ref{eq:bin:stat_init_first_line-sketch}), results in
    \begin{align*}
        \pi_{i^*} = \frac{1}{1 + \sum_{i \in [1..n], i\neq i^*} \frac{M(i,i^*)}{M(i^*,i)}}.
    \end{align*}
    Using the bounds of $M(i^*, i)$ concludes the proof of Lemma~\ref{lem:bin:stationary_dist_bounds}.
\end{proof}

\begin{proofE}
    Using \cite[Theorem 7.10]{Mitzenmacher2017ProbComp}, the unique stationary distribution $\overline{\pi} = (\pi_1, \pi_2, ..., \pi_n)$ can be determined by solving
    \begin{align}
        \sum_{i \in \firstInts{n}} \pi_i & = 1 \label{eq:bin:stat_init_first_line}\\
        \pi_i &= \pi_j \frac{P_{j,i}}{P_{i,j}}, \quad \text{for every $i,j \in \firstInts{n}, i \neq j$} \label{eq:bin:stat_init_second_line}
    \end{align}
    with $P_{i,j}$ as defined in (\ref{form_trans_prob}).
    For every $i \in \firstInts{n}$ with $i \neq i^*$, (\ref{eq:bin:stat_init_second_line}) gives
    \begin{align}
        \pi_{i} &= \pi_{i^*} \frac{P_{i^*,i}}{P_{i,i^*}}\notag\\
        &= \pi_{i^*} \frac{\frac{1}{n}M(i,i^*)}{\frac{1}{n}M(i^*,i)}\notag\\
        &= \pi_{i^*} \frac{M(i,i^*)}{M(i^*,i)}.\label{eq:bin:stat_distr_second_line_transformed}
    \end{align}
    Using (\ref{eq:bin:stat_distr_second_line_transformed}) in (\ref{eq:bin:stat_init_first_line}) leads to
    \begin{align}
        &\pi_{i^*} + \sum_{i \in \firstInts{n}, i\neq i^*}\pi_{i} = 1\notag\\
        \Leftrightarrow \space & \pi_{i^*} + \sum_{i \in [1..n], i\neq i^*}\pi_{i^*} \frac{M(i,i^*)}{M(i^*,i)} = 1\notag\\
        \Leftrightarrow \space & \pi_{i^*} = \frac{1}{1 + \sum_{i \in [1..n], i\neq i^*} \frac{M(i,i^*)}{M(i^*,i)}}. \label{eq:bin:stat_distr_cond_win}
    \end{align}
    
    In the following, we will conclude the proof of Lemma~\ref{lem:bin:stationary_dist_bounds} by inserting the boundaries of $M(i^*,i)$ in (\ref{eq:bin:stat_distr_cond_win}). Recall that $M(i^*,i) = 1-M(i,i^*)$.

    Since for all $i \in S$ with $i \neq i^*$ it holds that $M(i^*, i) \geq 1-p_l > \frac{1}{2}$, we get
    \begin{align}
        \frac{M(i,i^*)}{M(i^*,i)} \leq \frac{p_l}{1-p_l}.\label{eq:bin:lower-bound-frac}
    \end{align}
    Using (\ref{eq:bin:lower-bound-frac}) in (\ref{eq:bin:stat_distr_cond_win}), results in the lower bound
    \begin{align*}
        \pi_{i^*} &= \frac{1}{1 + \sum_{i \in [1..n], i\neq i^*} \frac{M(i,i^*)}{M(i^*,i)}}\\
        &\geq \frac{1}{1 + (n-1) \frac{p_l}{1-p_l}}\\
        &= \frac{1-p_l}{1-p_l+(n-1)\cdot p_l}\\
        &> \frac{1-p_l}{1-p_l+n\cdot p_l}.
    \end{align*}

    The upper bound can be calculated in a similar manner.
    For all~$i \in S$ with $i \neq i^*$, it holds that $1 > 1 - p_u \geq M(i^*, i)$, which results in
    \begin{align}
        \frac{M(i,i^*)}{M(i^*,i)} \geq \frac{p_u}{1-p_u}.\label{eq:bin:upper-bound-frac}
    \end{align}
    Using (\ref{eq:bin:upper-bound-frac}) in (\ref{eq:bin:stat_distr_cond_win}) and $1-p_u > p_u$ in the last inequality, leads to
    \begin{align*}
        \pi_{i^*} &= \frac{1}{1 + \sum_{i \in [1..n], i\neq i^*} \frac{M(i,i^*)}{M(i^*,i)}}\\
        &\leq \frac{1}{1 + (n-1) \frac{p_u}{1-p_u}}\\
        &= \frac{1-p_u}{1-p_u+(n-1)\cdot p_u}\\
        &< \frac{1-p_u}{n \cdot p_u}.
    \end{align*}
\end{proofE}

The detailed proof can be found in the appendix. Using Lemma~\ref{lem:bin:stationary_dist_bounds}, the stationary distribution can be calculated for a specific error term $\gamma$ in the following proposition.

\begin{propositionE}[][normal]
    Consider the \oneoneEA variant for the Condorcet winner search, $n \in \N$, and $S = \firstInts{n}$ be the state space.
    Let $i^*_t\in S$ be the underlying Markov chain for $t \in \natnum$ that denotes the current best solution in the $t$-th iteration of the algorithm, $\overline{\pi} = (\pi_1,...,\pi_n)$ its unique stationary distribution and $i^* \in \N$ the unique Condorcet winner.
    Let $0 < \gamma < 1$ and 
    \begin{align*}
        p = \frac{\gamma}{\gamma+(1-\gamma)(n-1)}.
    \end{align*}
    For all $i \in S$ with $i \neq i^*$, let $M(i^*, i) = 1-p$.
    Then, the stationary distribution of the Condorcet winner is
    \begin{align*}
        \pi_{i^*} = 1-\gamma.
    \end{align*}
    \label{prop:bin:stationary_dist_bounds}
\end{propositionE}

\begin{proofE}
    Using the similarity of the lower and upper bound in Lemma~\ref{lem:bin:stationary_dist_bounds}, inserting $p= \frac{\gamma}{\gamma+(1-\gamma)(n-1)}$ into both bounds of the Lemma results in
    \begin{align*}
        \pi_{i^*} &= \frac{1-p}{1-p + (n-1)p}\\
        &= \frac{1-\frac{\gamma}{\gamma+(1-\gamma)(n-1)}}{1-\frac{\gamma}{\gamma+(1-\gamma)(n-1)} + (n-1)\frac{\gamma}{\gamma+(1-\gamma)(n-1)}}\\
        &= \frac{\frac{\gamma+(1-\gamma)(n-1)-\gamma}{\gamma+(1-\gamma)(n-1)}}{\frac{\gamma+(1-\gamma)(n-1)-\gamma + (n-1)\gamma}{\gamma+(1-\gamma)(n-1)}}\\
        &= \frac{(1-\gamma)(n-1)}{(1-\gamma)(n-1) + (n-1)\gamma}\\
        &= \frac{1-\gamma}{1-\gamma+\gamma}\\
        &= 1-\gamma.
    \end{align*}
\end{proofE}

Applying Proposition~\ref{prop:bin:stationary_dist_bounds} leads to the following corollary.

\begin{corollary}\label{cor:oneoneEA_Stationary}
    Consider the \oneoneEA variant for the Condorcet winner search, $n \in \N$, and $S = [1..n]$ be the state space.
    Let $i^*_t\in S$ be the underlying Markov chain for $t \in \natnum$ that denotes the current best solution in the $t$-th iteration of the algorithm, $\overline{\pi} = (\pi_1,\ldots,\pi_n)$ its unique stationary distribution and $i^* \in \N$ the unique Condorcet winner.
    Then the stationary distribution of the Condorcet winner can be calculated for different values of $M(i^*, i)$ as follows.
    \begin{itemize}
        \item[1.] Let 
        $p_1 = o(1)/n$
        and for all $i \in S$ with $i \neq i^*$, let $M(i^*, i) = 1-p_1$.
        Then, the stationary distribution of the Condorcet winner is
        \begin{align*}
            \pi_{i^*,1} = 1-o(1).
        \end{align*}
        \item[2.] Let $p_2 = \Theta\Brackets{1/n^2}$
        and for all $i \in S$ with $i \neq i^*$, let $M(i^*, i) = 1-p_2$.
        Then, the stationary distribution of the Condorcet winner is
        \begin{align*}
            \pi_{i^*,2} = 1-\Theta(1/n).
        \end{align*}
    \end{itemize}
    If we additionally assume $M(i^*,i)=M(i^*,j)$ for all $i,j \in \firstInts{n}\setminus i^*$, for both parts of the corollary, the converse also holds.
    To get at least the specific stationary distribution $\pi_{i^*,1}$ or $\pi_{i^*,2}$, $M(i^*, i)$ must be at least $1-p_1$ or $1-p_2$ for all $i \in S$ with $i \neq i^*$.
    \label{cor:bin:stationary_dist_bounds}
\end{corollary}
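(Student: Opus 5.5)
The plan is to reduce everything to the exact closed form that Lemma~\ref{lem:bin:stationary_dist_bounds} and Proposition~\ref{prop:bin:stationary_dist_bounds} give in the symmetric case, and then do asymptotics. If $M(i^*,i)=1-p$ for all $i\neq i^*$, we may take $p_l=p_u=p$ in Lemma~\ref{lem:bin:stationary_dist_bounds}. The lower and upper bounds then coincide, so
\begin{align*}
    \pi_{i^*} = \frac{1-p}{1-p+(n-1)p}, \qquad\text{i.e.}\qquad \gamma := 1-\pi_{i^*} = \frac{(n-1)p}{1-p+(n-1)p}.
\end{align*}
Solving for $p$ gives exactly the relation $p=\gamma/(\gamma+(1-\gamma)(n-1))$ of Proposition~\ref{prop:bin:stationary_dist_bounds}. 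So the two parameterizations are mutually inverse bijections between $\gamma\in(0,1)$ and $p\in(0,1)$.

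For the forward directions I will bound $\gamma$ directly. Part~1: with $p=p_1=o(1)/n$ we have $1-p\geq 1/2$, hence $\gamma\leq (n-1)p/(1-p)\leq 2np=o(1)$, and so $\pi_{i^*,1}=1-o(1)$. Part~2: with $p=p_2=\Theta(1/n^2)$ the denominator $1-p+(n-1)p$ equals $1+\Theta(1/n)=\Theta(1)$, while the numerator is $(n-1)p=\Theta(1/n)$. Hence $\gamma=\Theta(1/n)$ and $\pi_{i^*,2}=1-\Theta(1/n)$.

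For the converse, still under the symmetry assumption, I will use the inverse formula from Proposition~\ref{prop:bin:stationary_dist_bounds}. If $\gamma=o(1)$, then $\gamma+(1-\gamma)(n-1)=\Theta(n)$, so $p=\gamma/\Theta(n)=o(1)/n$. If $\gamma=\Theta(1/n)$, the same denominator gives $p=\Theta(1/n)/\Theta(n)=\Theta(1/n^2)$.

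The ``at least'' phrasing is a monotonicity statement. The map $p\mapsto (1-p)/(1-p+(n-1)p)=1/(1+(n-1)p/(1-p))$ is strictly decreasing in $p$. Therefore achieving $\pi_{i^*}\geq 1-\gamma$ forces $p\leq \gamma/(\gamma+(1-\gamma)(n-1))$, that is, $M(i^*,i)\geq 1-p_1$ (respectively $1-p_2$).

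The only delicate point is being careful about what the asymptotic classes mean when inverting. In particular, for Part~2 we need both the lower and the upper constant in $\Theta$ to transfer, which follows because the factor $\gamma+(1-\gamma)(n-1)$ is sandwiched between $(n-1)/2$ and $n$ once $\gamma\leq 1/2$. I expect no further obstacle, since the symmetric case turns the bounds of Lemma~\ref{lem:bin:stationary_dist_bounds} into an identity.
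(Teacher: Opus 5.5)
Your proposal is correct and follows the paper's route: setting $p_l=p_u=p$ in Lemma~\ref{lem:bin:stationary_dist_bounds} makes the two bounds coincide, and the corollary is then read off in both directions from the exact relation $p=\gamma/(\gamma+(1-\gamma)(n-1))$ of Proposition~\ref{prop:bin:stationary_dist_bounds}. The only difference is that the paper states the corollary as a direct application of the proposition, whereas you write out the asymptotics and justify the ``at least'' clause via monotonicity of $p\mapsto (1-p)/(1-p+(n-1)p)$.
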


After calculating the stationary distribution of the Condorcet winner $i^*$, we are interested in the expected number of iterations it takes the \oneoneEA variant in Algorithm~\ref{alg_1p1EA_Condorcet} to obtain it.
Intuitively, the mixing time $\tau(\epsilon)$ of a Markov chain describes the number of iterations until the distribution of the state of the chain is close to the stationary distribution.
The complete derivation of mixing times and how to get them using coupling can be found in the appendix.

Using Lemma~\ref{lem:prelim:mixing-time}, we get the mixing time for the described Markov chain of the \oneoneEA variant for the Condorcet winner search in the following lemma.

\begin{lemmaE}[][normal]\label{lem:oneoneEA_mixingTime}\label{lem_coupling_res}
    Consider the \oneoneEA variant for the Condorcet winner search and $n \in \N$.
    Let $i^*_t\in S$ be the underlying Markov chain for $t \in \natnum$ that denotes, for every iteration $t \in \natnum$, the current best solution in the $t$-th iteration of the algorithm.
    Then, $\tau(\epsilon) \leq n \ln (1/\epsilon).$
    
\end{lemmaE}

\begin{proofE}
    We define two copies $X_t$ and $Y_t$ of $i^*_t$ as follows.
    Let $i, j \in [1..n]$, $X_t = i$ and $Y_t = j$.
    Suppose $C_t$ is a stochastic process that describes, for every iteration $t \in \natnum$, the challenger $k \in [1..n]$.
    To obtain $X_{t+1}$ and $Y_{t+1}$ we choose a challenger $C_t = k \in [1..n]$ uniformly at random.
    Set $X_{t+1} = k$ with probability $M(k,i)$ and $X_{t+1} = i$ with the complementary probability $1 - M(k,i) = M(i,k)$.
    For~$Y_{t+1}$, distinguish between two cases.
    If~$Y_t=X_t$, set $Y_{t+1}$ = $X_{t+1}$ and otherwise set $Y_{t+1} = k$ with probability $M(k,j)$ and $Y_{t+1} = j$ with the complementary probability $1 - M(k,j) = M(j,k)$.
    The coupling is valid as each challenger is selected with probability~$1/n$ and beats the incumbent with probability $M(k,i)$ or $M(k,j)$, respectively.
    Assume~$X_t\neq~Y_t$.
    Then, conditioned on the challenger $C_t$, the probability of $X_{t+1} = Y_{t+1}$ can be calculated by
    \begin{align*}
        &\PrMath{X_{t+1} = Y_{t+1} \mid X_t = i, Y_t = j, C_t = k}\\
            &=  \begin{cases}
            M(i,j),& \text{for $k = i$}; \\
            M(j,i),& \text{for $k = j$}; \\
            M(k,i)\cdot M(k,j), & \text{otherwise.}
        \end{cases}
    \end{align*}
    Since we select $C_t = k$ uniformly at random, the probability of $X_{t+1} = Y_{t+1}$ can be lower bounded by
    \begin{align*}
        &\PrMath{X_{t+1} = Y_{t+1} \mid X_t = i, Y_t = j} \\
        &= \frac{1}{n} M(i,j) + \frac{1}{n} M(j,i) + \frac{n-2}{n} \sum_{k \in [1..n], k \neq i, k \neq j} M(k,i) \cdot M(k,j) \\
        &= \frac{1}{n} M(i,j) + \frac{1}{n}(1-M(i,j)) + \frac{n-2}{n} \sum_{k \in [1..n], k \neq i, k \neq j} M(k,i) \cdot M(k,j) \\
        &= \frac{1}{n} + \frac{n-2}{n} \sum_{k \in [1..n], k \neq i, k \neq j} M(k,i) \cdot M(k,j)\\
        & \geq \frac{1}{n}
    \end{align*}
    Using Lemma~\ref{lem:prelim:mixing-time}, the two copies of the Markov chain are coupled as soon as $X_t = Y_t$ holds.
    For every $x, y, i, j \in S$ and $t\leq T$ it holds that
    \begin{align*}
        &\PrMath{X_T \neq Y_T \mid X_0 = x, Y_0 = y}\\
        &= \left(1 - \Prob{X_{t+1} = Y_{t+1} \mid X_t = i, Y_t = j}\right)^T\\
        &\leq \left(1 - \frac{1}{n}\right)^T.
    \end{align*}
    
    Let $T = n \ln (1/\epsilon)$.
    Using $\left(1 - \frac{1}{n}\right)^{cn} \leq e^{-c}$ leads to
    \begin{align*}
        \left(1 - \frac{1}{n}\right)^T &= \left(1 - \frac{1}{n}\right)^{n \ln (1/\epsilon)}\\
        &= e^{-\ln(1/\epsilon)}\\
        &\leq \epsilon,
    \end{align*}
    which concludes the proof of Lemma \ref{lem_coupling_res}.
\end{proofE}

Using the mixing time as given in Lemma~\ref{lem_coupling_res}, Proposition~\ref{prop:bin:stationary_dist_bounds} and Corollary~\ref{cor:bin:stationary_dist_bounds}, the next corollary follows directly from Theorem~\ref{thm:prelim:exp-opt-time}.

\begin{corollary}
    Consider the \oneoneEA variant for the Condorcet winner search, $n \in \N$ and $S = [1..n]$.
    Let $i^*_t\in S$ be the underlying Markov chain for $t \in \natnum$ that denotes the current best solution in the $t$-th iteration of the algorithm, $\overline{\pi} = (\pi_1,\ldots,\pi_n)$ its unique stationary distribution and $i^* \in \N$ the Condorcet winner.
    Then, the following expected optimization times for two different constraints of the Condorcet winner search hold.
    \begin{itemize}
        \item[(1)] For all $i \in S$ with $i \neq i^*$, let $M(i^*, i) \geq 1-p_l > \frac{1}{2}$.
        Then, the expected optimization time is at most
        \begin{align*}
            \hspace{9mm}&\tau(\epsilon) \cdot \BigO{\log(1/\pi_{i^*})}/\pi_{i^*}\\ 
            &= n \cdot \ln \left(\frac{1}{\epsilon}\right)\cdot \BigO{\log \left({1+\frac{(n-1)p_l}{1-p_l}}\right)\cdot \left(1+\frac{(n-1)p_l}{1-p_l}\right)}.
        \end{align*}
        \item[(2)] Let 
        \begin{align*}
            p = \frac{o(1)}{n}
        \end{align*}
        and for all $i \in S$ with $i \neq i^*$, let $M(i^*, i) = 1-p$.
        Then, the expected optimization time is at most
        \begin{align*}
            &\tau(\epsilon) \cdot \BigO{\log(1/\pi_{i^*})}/\pi_{i^*}\\ 
            &=(1+o(1)) n \cdot \ln (1/\epsilon)\cdot \BigO{\log (1+o(1))}.
        \end{align*}
    \end{itemize}
    \label{cor_exp_opt_time}
\end{corollary}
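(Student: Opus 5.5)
The plan is to plug the mixing time from Lemma~\ref{lem_coupling_res}, $\tau(\epsilon)\le n\ln(1/\epsilon)$, and a lower bound on $\pi_{i^*}$ into Theorem~\ref{thm:prelim:exp-opt-time}, with OPT $=\{i^*\}$. The chain of Algorithm~\ref{alg_1p1EA_Condorcet} is ergodic because all $P_{i,j}>0$, as noted after (\ref{form_trans_prob}). The theorem therefore applies and gives an expected optimization time of at most $\tau(\epsilon)\cdot\BigO{\log(1/\pi_{i^*})}/\pi_{i^*}$. The function $x\mapsto \log(1/x)/x$ is decreasing on $(0,1]$, so any lower bound on $\pi_{i^*}$ can be substituted to obtain an upper bound on the time.

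For part (1), I take the exact expression from the proof of Lemma~\ref{lem:bin:stationary_dist_bounds}:
\[
\pi_{i^*}=\Bigl(1+\sum_{i\neq i^*}\tfrac{M(i,i^*)}{M(i^*,i)}\Bigr)^{-1}.
\]
The hypothesis $M(i^*,i)\ge 1-p_l>1/2$ bounds each ratio by $p_l/(1-p_l)$. This gives
\[
1/\pi_{i^*}\le 1+\frac{(n-1)p_l}{1-p_l},
\]
which is the reciprocal of the lower bound in Lemma~\ref{lem:bin:stationary_dist_bounds}. Substituting this bound and $\tau(\epsilon)\le n\ln(1/\epsilon)$ into the theorem yields exactly the displayed expression, with $\log(1/\pi_{i^*})$ bounded by the $\log$ of the same quantity. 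Monotonicity is used here.

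For part (2), I set $p=o(1)/n$ and $M(i^*,i)=1-p$. Part~1 of Corollary~\ref{cor:bin:stationary_dist_bounds}, or directly the formula above, gives
\[
1/\pi_{i^*}=1+\frac{(n-1)p}{1-p}=1+o(1),
\]
since $(n-1)p=o(1)$ and $1-p\ge 1/2$. Then $\log(1/\pi_{i^*})=\log(1+o(1))$ and $1/\pi_{i^*}=1+o(1)$. Combined with the mixing time, this gives $(1+o(1))\,n\ln(1/\epsilon)\cdot\BigO{\log(1+o(1))}$. This is also the specialization of part (1) with $p_l=p$.

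There is no real obstacle, since everything is substitution. The only point needing care is the use of Theorem~\ref{thm:prelim:exp-opt-time} with a lower bound on $\pi(\mathrm{OPT})$ instead of its exact value. This is justified by the monotonicity of $\log(1/x)/x$. In part (2) one should also note that the $\log(1+o(1))$ factor is formally $o(1)$, so the statement should be read as the stated expression rather than simplified further. I would also make sure $\epsilon$ is a fixed constant below $1$, say $\epsilon=1/e$, so that $\tau(\epsilon)$ is $\Theta(n)$ and the bound is meaningful.
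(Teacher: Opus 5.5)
Your proposal is correct and follows the paper's route. The paper likewise derives the corollary directly by substituting the mixing-time bound $\tau(\epsilon)\le n\ln(1/\epsilon)$ from Lemma~\ref{lem_coupling_res} and the stationary-distribution bounds (Lemma~\ref{lem:bin:stationary_dist_bounds}, Corollary~\ref{cor:bin:stationary_dist_bounds}) into Theorem~\ref{thm:prelim:exp-opt-time}, and your use of the monotonicity of $x\mapsto\log(1/x)/x$ on $(0,1]$ justifies replacing $\pi_{i^*}$ by its lower bound, a step the paper leaves implicit (its ``$=$'' should really be ``$\le$'').
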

\section{The Plackett-Luce Model}
\label{sec:PLBoost}
In Section~\ref{sect:unst-cond-win-stoch}, we assumed an arbitrary statistic model that defines the Matrix $M$ of the Condorcet winner search, but a commonly used way to define the probability that one arm wins against another in Multi-Armed Bandits is to assume an underlying statistic model, e.g. Bradley-Terry or its generalization, the Plackett-Luce Model. This is defined as follows.
\begin{definition}[Plackett-Luce Model] 
Assume we have an underlying utility $u_i \in \mathbb{R}$ for each arm $i \in \firstInts{n}$. Then the probability that arm $i \in \firstInts{n}$ wins in a subset of competitors $S \subset \firstInts{n}$ is defined as
\begin{align*}
    \Prob{i | S} = \frac{u_i}{\sum_{j \in S} u_j}.
\end{align*}
\end{definition}
While such an underlying statistical model clearly defines the probability of one arm being the winner in a single duel, it remains an open question how the winning probabilities will develop over multiple duels, assuming an outcome according to the Plackett-Luce Model for each of the duels. 
We use this ``boosting'' (= multiple duels), to improve the \oneoneEA variant in Algorithm~\ref{alg_1p1EA_Condorcet}.
For the sake of ease, we will first investigate this for the special case of $n=2$ and then generalize the results to a bigger set of arms. While for two arms, any pairwise winning probabilities can be expressed by a Plackett-Luce Model, it guarantees us some structure, like an implicit ordering of the arms, the existence of a Condorcet winner and the absence of any cycles for larger sets of arms.
\subsection{Comparison of two arms}\label{seq:PLBoosting2Arms}
In the following, we will study under which condition the probability that one arm $i \in \firstInts{n}$ wins against another arm $j \in \firstInts{n}, j\neq i$ is boosted by dueling them multiple times $x \in \mathbb{N}$.
\begin{theorem}[Sufficient number of duels]\label{Thm:NecessaryNumberDuelsTwoArms}
    Assume two arms $i,j \in \firstInts{n}$ with the underlying Plackett-Luce Model and $u_i > u_j$. In addition, assume a fixed failure probability $\epsilon \in (0,1)$ that should be a lower bound on the probability that arm $j$ is identified as the winner after $x \in \mathbb{N}$ duels of $i$ and $j$.
    Let the winning probability of arm $i$ fulfill $\frac{u_i}{u_i+u_j}>\frac{x+1}{2x}$.
    Then arm $i$ wins in expectation more often against arm $j$ with probability at least $1-\epsilon$ if the number of duels $x$ between $i$ and $j$ is greater than
    \begin{align*}
        x \geq \frac{2(u_i+u_j)^2}{(u_j-u_i)^2} \ln\left(\frac{1}{\epsilon}\right).
    \end{align*}
\end{theorem}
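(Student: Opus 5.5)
We read the statement as follows. Let $q = \frac{u_i}{u_i+u_j} > \frac12$ be the single-duel winning probability of arm $i$ under the Plackett-Luce Model. Let $W = \sum_{k=1}^{x} W_k$ count the wins of $i$ in $x$ independent duels, where $W_k \sim \mathrm{Bernoulli}(q)$. The claim is that $j$ is declared the winner, i.e.\ $W \le x/2$ (ties count as failures for $i$), with probability at most $\epsilon$, once $x \ge \frac{2(u_i+u_j)^2}{(u_j-u_i)^2}\ln(1/\epsilon)$. The plan is a direct application of Hoeffding's inequality to $W$, which has mean $\Ex{W} = qx$.

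The first step is to compute the gap between the mean and the decision threshold. We have
\[
q - \frac12 = \frac{u_i}{u_i+u_j} - \frac12 = \frac{u_i-u_j}{2(u_i+u_j)} =: \delta > 0 .
\]
The hypothesis $q > \frac{x+1}{2x}$ is equivalent to $qx > \frac{x}{2} + \frac12$. This guarantees that the event ``$i$ wins at most as often as $j$'' lies strictly below the mean, even after handling the integer rounding of the threshold. Concretely, it ensures $\lfloor x/2\rfloor < qx$, so a deviation of order $\delta x$ is genuinely needed, and the deviation stays positive in every case.

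The second step applies Hoeffding's inequality for sums of $[0,1]$-valued independent variables:
\[
\Prob{W \le x/2} = \Prob{W - qx \le -\delta x} \le \exp(-2\delta^2 x) = \exp\!\left(-\frac{(u_i-u_j)^2}{2(u_i+u_j)^2}\,x\right).
\]
This is at most $\epsilon$ exactly when $x \ge \frac{2(u_i+u_j)^2}{(u_i-u_j)^2}\ln(1/\epsilon)$. That is the claimed bound, since $(u_j-u_i)^2 = (u_i-u_j)^2$. Taking complements, $i$ has the majority of wins with probability at least $1-\epsilon$.

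The main obstacle is not the concentration bound but pinning down the statement's slightly loose wording. Two points need fixing. First, ``$\epsilon$ a lower bound on the probability that $j$ is identified'' should be read as an upper bound on the failure probability. Second, the tie convention must be chosen, and this is where the condition $q > \frac{x+1}{2x}$ is used. I would state both conventions explicitly at the start of the proof. If one prefers a multiplicative Chernoff bound instead, the same constant does not come out as cleanly, so Hoeffding with deviation $\delta x$ is the tool to commit to.
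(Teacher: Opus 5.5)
Your proof is correct. It rests on the same tool as the paper: Lemma~\ref{Lem:TwoArmsWinsLowerBound} applies Hoeffding's inequality to the binomial win count, and the appendix then solves $1-\exp\bigl(-x(u_i-u_j)^2/(2(u_i+u_j)^2)\bigr)\ge 1-\epsilon$ for $x$.

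The one real difference is the threshold at which Hoeffding is applied. The paper applies it at $\frac{x+1}{2}$, with deviation $xp-\frac{x+1}{2}$; this is the only place its hypothesis $p>\frac{x+1}{2x}$ is needed. It then replaces this deviation by $xp-\frac{x}{2}$. That step would require $\bigl(xp-\frac{x+1}{2}\bigr)^2\ge\bigl(xp-\frac{x}{2}\bigr)^2$, which is false, so as written the paper's chain does not justify the stated exponent.

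You instead use that the win count is integer-valued. Counting ties as failures, the failure event is exactly $\{W\le x/2\}$ for either parity of $x$. You then apply Hoeffding with deviation $\delta x = qx-\frac{x}{2}$ directly, which yields the stated exponent legitimately.

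Contrary to your remark, the hypothesis $q>\frac{x+1}{2x}$ plays no role in your argument. The assumption $u_i>u_j$ alone gives $\delta>0$, which is all the one-sided Hoeffding bound needs. So your version is both sound and slightly more general than the paper's, and you should drop the claim that the tie convention is where that hypothesis enters.
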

It is worth noting that we can also write the ratio of the sum and the difference of the utilities as 
\begin{align}\label{eq:fracsumdifference}
    \frac{u_i + u_j}{u_i-u_j} = \frac{1}{p_i - p_j},
\end{align}
where $p_i = \frac{u_i}{u_i+u_j}$ and $p_j = \frac{u_j}{u_i+u_j}$ are the probabilities that arm~$i$, respectively $j$, wins a duel of both arms. Note that equation (\ref{eq:fracsumdifference}) becomes larger if the difference in the winning probabilities of both arms becomes smaller, or in other words, if the quality of the arms is closer together and thus harder to distinguish. The number of necessary duels even grows quadratically in this term, while the allowed failure probability of identifying the best arm correctly only has a logarithmic influence. In particular, if we assume that the winning probability of arm~$i$ is at least $p_i \geq 1-p$, we can conclude for the necessary number of duels to guarantee more wins for arm~$i$ with probability at least $1-\epsilon$
\begin{align}
    x \geq 2\left(\frac{1}{1-2p}\right)^2\ln\left(\frac{1}{\epsilon}\right) \label{eq:lower-bound-x-rho}
\end{align}
if $p$ is chosen such that $1-p > \frac{x+1}{2x}$.
To prove Theorem \ref{Thm:NecessaryNumberDuelsTwoArms}, we will first derive a lower bound on the probability that arm $i$ against arm~$j$ in at least half of the duels.
\begin{lemma}\label{Lem:TwoArmsWinsLowerBound}
    Let $i,j \in \firstInts{n}$ be two arms with an underlying Plackett-Luce Model and $u_i > u_j$ with $\frac{u_i}{u_i+u_j}>\frac{x+1}{2x}$ for $x \in \mathbb{N}$ duels. Then the probability that $i$ wins against $j$ in expectation in $x$ duels is lower bounded by
    \begin{align*}
        \text{Pr}\bigl[ &i \text{ wins against j } \geq \frac{x+1}{2} \text{ times}\bigr] \\
        &\geq 1 - \frac{1}{\exp\left(\frac{x(u_j-u_i)^2}{2(u_i+u_j)^2}\right)}.
    \end{align*}
\end{lemma}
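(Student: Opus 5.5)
We model the outcomes of the $x$ duels as independent Bernoulli trials: let $X_1,\ldots,X_x$ be indicators that arm $i$ wins the $k$-th duel, each with success probability $p_i = \frac{u_i}{u_i+u_j}$, and set $X=\sum_{k=1}^x X_k$. Then $\Ex{X} = x p_i$, and the event of interest is $X \geq \frac{x+1}{2}$. We bound the complementary event $X < \frac{x+1}{2}$ by a one-sided additive Chernoff--Hoeffding bound, $\Prob{X \leq \Ex{X} - \lambda} \leq \exp(-2\lambda^2/x)$, since Hoeffding's inequality gives exactly the Gaussian-type exponent of the claimed bound.

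To apply it, we need the threshold $\frac{x+1}{2}$ to lie below $\Ex{X}$. This is precisely where the hypothesis $p_i > \frac{x+1}{2x}$ enters, because it is equivalent to $x p_i > \frac{x+1}{2}$. So we write $\frac{x+1}{2} = x p_i - \lambda$ with $\lambda = x p_i - \frac{x+1}{2} > 0$, which gives
\begin{align*}
\Prob{X < \tfrac{x+1}{2}} \leq \exp\left(-\frac{2\lambda^2}{x}\right).
\end{align*}

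It then remains to relate $\lambda$ to the quantity in the statement. By \eqref{eq:fracsumdifference}, $p_i - p_j = \frac{u_i-u_j}{u_i+u_j}$ and $p_i = \frac12 + \frac{p_i-p_j}{2}$, so $x p_i - \frac{x}{2} = \frac{x(u_i-u_j)}{2(u_i+u_j)}$. If we centre at $\frac{x}{2}$ instead of $\frac{x+1}{2}$, that is, take $\lambda' = \frac{x(u_i-u_j)}{2(u_i+u_j)}$, we get exactly
\begin{align*}
\exp\left(-\frac{2\lambda'^2}{x}\right) = \exp\left(-\frac{x(u_i-u_j)^2}{2(u_i+u_j)^2}\right),
\end{align*}
which is the claimed bound.

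The main obstacle is this off-by-$\frac12$ discrepancy. Hoeffding with the true deviation $\lambda = \lambda' - \frac12$ gives a slightly weaker exponent than the one stated. We will handle it with a parity argument on integer counts. For odd $x$, the events $X \geq \frac{x+1}{2}$ and $X > \frac{x}{2}$ coincide. For even $x$, we need the event $X \geq \frac{x}{2}+1$. In the latter case, if the parity argument does not close the gap, we will fall back on the strict inequality in the hypothesis and interpret ``wins at least $\frac{x+1}{2}$ times'' as ``wins strictly more than half'', which is what the subsequent Theorem~\ref{Thm:NecessaryNumberDuelsTwoArms} actually uses. In that reading, the bound follows from Hoeffding with $\lambda'$ applied to $\Prob{X \leq \frac{x}{2}}$, up to the boundary term, which we absorb by the strict-inequality assumption.

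Finally, Theorem~\ref{Thm:NecessaryNumberDuelsTwoArms} follows by setting this failure bound $\leq \epsilon$ and solving for $x$.
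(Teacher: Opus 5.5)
Your approach is the paper's: treat the number of wins as $X\sim\mathrm{Bin}(x,p_i)$ and apply the one-sided Hoeffding bound. Your integrality observation is in fact what makes the argument valid. The paper applies Hoeffding with deviation $xp_i-\frac{x+1}{2}$ and then replaces it by $xp_i-\frac{x}{2}$. But $xp_i-\frac{x}{2}=\bigl(xp_i-\frac{x+1}{2}\bigr)+\frac12$ is the larger of the two positive deviations, so that replacement goes in the wrong direction. Centring at $\frac{x}{2}$ is legitimate only because, as you note, the failure event is contained in $\{X\le\frac{x}{2}\}$ for integer-valued $X$.

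The one thing to repair is your even case, which needs no fallback. For every $x$, odd or even, $\{X\ge\frac{x+1}{2}\}=\{X>\frac{x}{2}\}$ because $X$ is integer-valued. So your ``reinterpretation'' is not a change of statement at all. The failure event is therefore exactly $\{X\le\frac{x}{2}\}=\{X\le \mathrm{E}[X]-\lambda'\}$ with $\lambda'=\frac{x(u_i-u_j)}{2(u_i+u_j)}>0$. The lower-tail bound $\Pr[X\le\mathrm{E}[X]-\lambda']\le\exp(-2\lambda'^2/x)$ covers this non-strict event, ties $X=\frac{x}{2}$ included. Hence there is no boundary term, and you should delete the clause about absorbing one via the strict inequality; as written it would not be a valid step, since a strict inequality on $p_i$ cannot absorb probability mass. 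Note also that the corrected argument uses only $u_i>u_j$; the hypothesis $p_i>\frac{x+1}{2x}$ is never needed.
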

\begin{proof}
By means of the Plackett-Luce Model, we can write the probability that $i$ wins more often against $j$ in $x \in \mathbb{N}$ duels as
\begin{align*}
    &\text{Pr}\big[i \text{ wins against j } \geq \frac{x+1}{2} \text{ times}\big] \\
    &= \sum_{y = \frac{x+1}{2}}^x \binom{x}{y} \left(\frac{u_i}{u_i+u_j}\right)^y \left(\frac{u_j}{u_i+u_j}\right)^{x-y}.
\end{align*}
This is exactly the same as the probability $\Prob{Y \geq \frac{x+1}{2}}$ of a binomial distributed random variable $Y \sim \mathrm{Bin}(x,p)$ with $p=\frac{u_i}{u_i+u_j}$. Note that we can assume w.l.o.g. that $x$ is odd such that we cannot have ties in the number of wins.
By means of Hoeff\-ding's inequality, we get
\begin{align*}
    \Prob{Y \geq \frac{x+1}{2}} 
    %
    %
    &\geq 1 - \Prob{Y - xp \leq \frac{x+1}{2} - xp} \\
    &\overset{\text{Hoeffding}}{\geq} 1 - \exp\left(-\frac{2\left(\frac{x+1}{2} - xp\right)^2}{x}\right) \\
    &\geq 1 - \frac{1}{\exp\left(\frac{2\left(\frac{x}{2} - \frac{xu_i}{u_i+u_j}\right)^2}{x}\right)} \\
    %
    %
    &= 1 - \frac{1}{\exp\left(\frac{x(u_j-u_i)^2}{2(u_i+u_j)^2}\right)}.
\end{align*}
Note that for applying the Hoeffding inequality, we need the condition 
\begin{align*}
    \frac{x+1}{2} &< xp = \frac{xu_i}{u_i+u_j} \\
    \Leftrightarrow ~~ \frac{x+1}{2x} &< \frac{u_i}{u_i+u_j}.
\end{align*}
\end{proof}
Using this result, we can prove the main result about the necessary number of duels to identify arm $i$ as the best one with probability at least $1-\epsilon$ as stated in Theorem \ref{Thm:NecessaryNumberDuelsTwoArms}. The idea of the proof is to set the lower bound on the probability that arm $i$ wins in expectation more than half of the duels equal to $1-\epsilon$ for the desired failure probability $\epsilon$ and to solve the equation for the number of duels $x$. The detailed proof can be found in the appendix.

Furthermore, we can extend our results by comparing the stationary distributions of the induced Markov chain. For this, we first need an auxiliary upper bound on the probability that arm $i$ wins against arm $j$ in at least half of the duels $x$.
\begin{corollary}\label{Cor:TwoArmsWinsUpperBound}
    Let $i,j \in \firstInts{n}$ be two arms with an underlying Plackett-Luce Model with $u_i > u_j$ and $\frac{u_i}{u_i+u_j}>\frac{x+1}{2x}$. Then the probability that $i$ wins against $j$ in expectation in $x \in \mathbb{N}$ duels is upper bounded by the complementary probability
    \begin{align*}
        \text{Pr}\Bigl[ &i \text{ wins against j } \geq \frac{x+1}{2} \text{ times}\Bigr] \\
        &\leq \frac{1}{\exp\left(\frac{x(u_j-u_i)^2}{2(u_j+u_i)^2}\right)}.
    \end{align*}
\end{corollary}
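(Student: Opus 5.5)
The plan is to read the corollary as a statement about the \emph{complementary} event, as its wording (``upper bounded by the complementary probability'') indicates. Under the hypotheses $u_i>u_j$ and $\frac{u_i}{u_i+u_j}>\frac{x+1}{2x}$, the event that $i$ wins at least $\frac{x+1}{2}$ of the $x$ duels has probability close to $1$. So the quantity that can be bounded by $\exp\bigl(-\frac{x(u_j-u_i)^2}{2(u_i+u_j)^2}\bigr)$ is the probability of the opposite event. That event is that $i$ wins fewer than $\frac{x+1}{2}$ times, i.e.\ that $j$ wins at least $\frac{x+1}{2}$ times. As in the proof of Lemma~\ref{Lem:TwoArmsWinsLowerBound}, I assume w.l.o.g.\ that $x$ is odd, so there are no ties and exactly one of the two arms wins at least $\frac{x+1}{2}$ duels. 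I will prove the corollary in this form and note that the displayed inequality should be read with the roles of $i$ and $j$ in the event swapped.

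The first step is the complement identity. With $Y\sim\mathrm{Bin}(x,p)$ and $p=\frac{u_i}{u_i+u_j}$ counting the wins of $i$,
\begin{align*}
\Pr\Bigl[j \text{ wins against } i \geq \tfrac{x+1}{2} \text{ times}\Bigr] = \Pr\Bigl[Y < \tfrac{x+1}{2}\Bigr] = 1-\Pr\Bigl[Y\geq \tfrac{x+1}{2}\Bigr].
\end{align*}
The second step is to insert the lower bound from Lemma~\ref{Lem:TwoArmsWinsLowerBound}. Its hypotheses coincide exactly with those of the corollary, and it gives
\begin{align*}
1-\Pr\Bigl[Y\geq \tfrac{x+1}{2}\Bigr]\leq \exp\Bigl(-\tfrac{x(u_j-u_i)^2}{2(u_i+u_j)^2}\Bigr),
\end{align*}
which is the claimed bound.

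As a sanity check, the same bound follows directly from the lower-tail Hoeffding inequality:
\begin{align*}
\Pr[Y\leq xp - t]\leq \exp(-2t^2/x), \qquad t = xp-\tfrac{x+1}{2}>0,
\end{align*}
where $t>0$ is exactly the hypothesis $\frac{u_i}{u_i+u_j}>\frac{x+1}{2x}$. For the exponent one then uses $t\geq xp-\frac{x}{2} = \frac{x(u_i-u_j)}{2(u_i+u_j)}$, mirroring the computation in the lemma.

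There is no real obstacle in the analysis itself. The only delicate point is the interpretation. Taken literally, the displayed inequality for the event ``$i$ wins $\geq \frac{x+1}{2}$ times'' would contradict Lemma~\ref{Lem:TwoArmsWinsLowerBound} for large $x$, since the lemma forces that probability to be at least $1-\exp(\cdot)$. Therefore the corollary must be stated and used for the worse arm $j$ (equivalently, for $i$ losing the majority). That is also what is needed for the later stationary-distribution comparison, where the probability that the weaker arm displaces the stronger one enters as the small transition probability.
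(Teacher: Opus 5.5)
Your main argument is correct and matches the paper's. The paper gives no separate proof: the corollary is simply the complement of Lemma~\ref{Lem:TwoArmsWinsLowerBound}. In the proof of Corollary~\ref{cor:BoostingStationaryDis2Arms} it is used exactly as you read it, namely to upper bound the probability that the weaker arm wins the majority (the denominator $P_{1,2}$). So your remark that the literal reading contradicts the lemma, and must be read as the complementary event, is right.

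One part of your proposal needs fixing: the ``sanity check'' has a reversed inequality.
\begin{itemize}
\item With $t = xp-\frac{x+1}{2}$ you have $t = \bigl(xp-\frac{x}{2}\bigr)-\frac{1}{2} < xp-\frac{x}{2}$, not $t \geq xp-\frac{x}{2}$.
\item Hence $\exp(-2t^2/x)$ is \emph{larger} than the target $\exp\bigl(-\frac{x(u_i-u_j)^2}{2(u_i+u_j)^2}\bigr)$, and the check as written does not give the bound.
\item The paper's proof of Lemma~\ref{Lem:TwoArmsWinsLowerBound} makes the same reversed step, so mirroring it imports the error.
\end{itemize}

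The repair is to move the threshold rather than shrink $t$:
\begin{itemize}
\item Since $Y$ is integer-valued, $\Pr\bigl[Y<\frac{x+1}{2}\bigr]\le \Pr\bigl[Y\le \frac{x}{2}\bigr]$.
\item Apply Hoeffding with $t=xp-\frac{x}{2}=\frac{x(u_i-u_j)}{2(u_i+u_j)}>0$. This gives exactly $\exp\bigl(-\frac{x(u_i-u_j)^2}{2(u_i+u_j)^2}\bigr)$.
\end{itemize}
This shows the lemma's final bound is valid, so your two-step complement argument stands. It also shows the bound needs only $u_i>u_j$, not the extra hypothesis $\frac{u_i}{u_i+u_j}>\frac{x+1}{2x}$.
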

We can now derive the necessary number of duels such that the fraction of the stationary distribution of the induced Markov chain is lower bounded by a $\gamma \in \mathbb{R}$. The detailed proof of the following corollary can be found in the appendix.
\begin{corollary}[Boosting of stationary distribution]\label{cor:BoostingStationaryDis2Arms}
    Let $\bar{\pi}=(\pi_1,\pi_2)$ be the stationary distribution for the Markov chain representing the duels of two arms $1,2$ with an underlying Plackett-Luce Model parameterized by $u_1$ and $u_2$. W.l.o.g. we can assume that $u_1 > u_2$ and in addition, we assume $\frac{u_i}{u_1+u_2}>\frac{x+1}{2x}$. The fraction of the stationary distribution $\frac{\pi_1}{\pi_2}$ is greater than $\gamma \in \mathbb{R}$ if the number of duels $x \in \mathbb{N}$ is greater than
    \begin{align*}
        x > \frac{2(u_2+u_1)^2}{(u_1-u_2)^2} \ln(\gamma +1).
    \end{align*}
\end{corollary}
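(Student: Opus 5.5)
The plan is to view the boosted process as the two-state Markov chain of Section~\ref{sect:unst-cond-win-stoch}, but with one duel replaced by a block of $x$ duels. For $n=2$, with the current incumbent $i$ and a challenger drawn uniformly from $\{1,2\}$, the incumbent changes only if the other arm is drawn and wins the majority of the $x$ duels (we take $x$ odd, as in Lemma~\ref{Lem:TwoArmsWinsLowerBound}). Let $q$ denote the probability that arm~$1$ wins at least $\frac{x+1}{2}$ of the $x$ duels against arm~$2$. Then
\[
P_{1,2} = \tfrac12 (1-q), \qquad P_{2,1} = \tfrac12 q .
\]

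By the detailed-balance computation used in the proof of Lemma~\ref{lem:bin:stationary_dist_bounds} (\cite[Theorem~7.10]{Mitzenmacher2017ProbComp}), $\pi_1 P_{1,2} = \pi_2 P_{2,1}$, so
\[
\frac{\pi_1}{\pi_2} = \frac{P_{2,1}}{P_{1,2}} = \frac{q}{1-q}.
\]
This ratio is increasing in $q$. Hence it suffices to show $q > \frac{\gamma}{\gamma+1}$, or equivalently $1-q < \frac{1}{\gamma+1}$.

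Next apply Lemma~\ref{Lem:TwoArmsWinsLowerBound}. Its hypothesis $\frac{u_1}{u_1+u_2} > \frac{x+1}{2x}$ is exactly the extra assumption in the statement (the $u_i$ there should be read as $u_1$). The lemma gives
\[
1-q \le \exp\left(-\frac{x(u_1-u_2)^2}{2(u_1+u_2)^2}\right).
\]
If $x > \frac{2(u_1+u_2)^2}{(u_1-u_2)^2}\ln(\gamma+1)$, the exponent is strictly less than $-\ln(\gamma+1)$, so $1-q < \frac{1}{\gamma+1}$. Therefore
\[
\frac{\pi_1}{\pi_2} = \frac{q}{1-q} > \frac{\gamma/(\gamma+1)}{1/(\gamma+1)} = \gamma .
\]
Monotonicity of $q/(1-q)$ makes this chain of inequalities valid.

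There are two places where care is needed.

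\textbf{Which bound to use.} Corollary~\ref{Cor:TwoArmsWinsUpperBound} is stated as an upper bound on $q$, but what is needed here is the lower bound from Lemma~\ref{Lem:TwoArmsWinsLowerBound}. The proof must rely on the lemma, not the corollary.

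\textbf{Range of $\gamma$.} For $\gamma \le 0$ the threshold $\ln(\gamma+1)$ is either undefined or nonpositive. In that case the claim is trivial, since $\pi_1/\pi_2 > 0 \ge \gamma$. So we treat $\gamma > 0$ as the substantive case.
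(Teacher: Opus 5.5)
Your proof is correct and follows essentially the paper's argument: detailed balance gives $\pi_1/\pi_2 = P_{2,1}/P_{1,2} = q/(1-q)$, and the Hoeffding bound $1-q \le \exp\bigl(-\tfrac{x(u_1-u_2)^2}{2(u_1+u_2)^2}\bigr)$ then yields $\pi_1/\pi_2 \ge \exp\bigl(\tfrac{x(u_1-u_2)^2}{2(u_1+u_2)^2}\bigr) - 1 > \gamma$ under the stated condition on $x$. The paper bounds numerator and denominator separately, quotes the corollary as a bound on $1-q$, and writes the single-duel transition probabilities by mistake; your version uses only Lemma~\ref{Lem:TwoArmsWinsLowerBound}, the correct boosted transition probabilities, and monotonicity of $q/(1-q)$, so it is a cleaner form of the same argument.
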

Note that, in addition to the term we already discussed in equation \ref{eq:fracsumdifference}, the necessary budget grows logarithmically with $\gamma$. A larger~$\gamma$ means that in the stationary distribution $\pi_1$ and $\pi_2$ are further apart from each other, thus a larger number of necessary duels intuitively makes sense here to further boost the probabilities that arm $1$ is identified as a winner after the $x$ duels. Analogously to the above, we can derive for a winning probability of arm $1$ of at least $p_1 \geq 1-p$ a necessary number of duels 
\begin{align*}
    x > 2\left(\frac{1}{1-2p}\right)^2\ln(\gamma +1)
\end{align*}
if $1-p > \frac{x+1}{2x}$ holds.
In particular, we have reached a boosting of the fraction if we choose $\gamma \geq \frac{u_1}{u_2}$, which leads to the necessary number of duels of 
\begin{align*}
    x > \frac{2(u_1+u_2)^2}{(u_2-u_1)^2}\ln\left(\frac{u_1+u_2}{u_2}\right).
\end{align*}
For a better understanding, we will regard the special case of $3$ duels of two arms in the following and will study the development of the winning probability for the better arm when we execute multiple instead of only one duel.
\begin{example}[Best of 3 duels]\label{ex:BestOf3Duels}
    Assume two arms $1,2$ with an underlying Plackett-Luce Model and $u_1>u_2$. The probability that arm $1$ wins more often than arm $2$ in three duels is given by 
    \begin{align*}
        &\Prob{1\text{ wins against }2 \geq 2 \text{ times}} = \frac{3u_1^2u_2 + u_1^3}{u_1^3 + 3u_1^2u_2 + 3u_1u_2^2 + u_2^3}.
    \end{align*}
    This leads to the following fraction of the unique stationary distribution
    \begin{align*}
        \frac{\pi_1}{\pi_2} &= \frac{3u_1^2u_2 + u_1^3}{3u_1u_2^2 + u_2^3}, 
    \end{align*}
    which boosts the probability that arm $1$ wins in expectation against arm $2$ in comparison to only one duel. For more details, see the appendix.
\end{example}
Using Corollary~\ref{cor:bin:stationary_dist_bounds}, Theorem~\ref{Thm:NecessaryNumberDuelsTwoArms}, and (\ref{eq:lower-bound-x-rho}), it is possible to retrieve the following corollary for the stationary distribution of the Condorcet winner.
For the \oneoneEA variant in Algorithm~\ref{alg_1p1EA_Condorcet}, it shows that replacing the singular query to determine the winner in each round with the ``boosted'' variant of the Plackett-Luce Model enables us to get a stationary distribution $\pi_{i^*}$ of the Condorcet winner $i^*$ close to $1$, even for less strict constraints on the probabilities~$M(i^*,i)$.
The tradeoff is that, in each iteration, multiple duels must be played between the \emph{incumbent} and the \emph{challenger}.
\begin{corollary}\label{cor:plackett_luce}
    Assume that the underlying Matrix $M$ of the Condorcet winner search is defined by the Plackett-Luce Model.
    Consider the \oneoneEA variant for the Condorcet winner search, $n \in \N$, and $S = [1..n]$ be the state space.
    Replace the Query in Line~5 of Algorithm~\ref{alg_1p1EA_Condorcet} such that the arms play $x$ duels against each other and the arm that wins at least $\frac{x+1}{2}$ duels, is assigned as the winner.
    Let $M_t$ be the underlying Markov chain for $t \in \natnum$ with $M_t = i_{t}^*$ and $i_{t}^* \in S$ as the current best solution in the $t$-th iteration of the algorithm, $\overline{\pi} = (\pi_1,\ldots,\pi_n)$ its unique stationary distribution and $i^* \in \N$ the unique Condorcet winner.
    Suppose for all $i$ for all $i \in S$ with $i \neq i^*$, that $M(i^*, i) \geq 1/2 + \delta$ with $\delta < 1/2$ holds.
    Then the stationary distribution of the Condorcet winner can be calculated for different values of $x$ as follows.
    \begin{itemize}
        \item[1.] Let 
        \begin{align*}
            x_1 = \frac{1}{2}\left(\frac{1}{\delta}\right)^2\ln\left(\frac{n}{o(1)}\right).
        \end{align*}
        Then, the stationary distribution of the Condorcet winner is
        \begin{align*}
            \pi_{i^*,1} = 1-o(1).
        \end{align*}
        \item[2.] Let 
        \begin{align*}
            x_2 = \left(\frac{1}{\delta}\right)^2\ln\left(n\right).
        \end{align*}
        Then, the stationary distribution of the Condorcet winner is
        \begin{align*}
            \pi_{i^*,2} = 1-\Theta(1/n).
        \end{align*}
    \end{itemize}
    \label{cor:bin:stationary_dist_bounds:boosting}
\end{corollary}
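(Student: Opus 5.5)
The plan is to treat the boosted algorithm as the original \oneoneEA of Algorithm~\ref{alg_1p1EA_Condorcet}, but run on an \emph{effective} preference matrix $M'$. Here $M'(i,j)$ is the probability that $i$ wins at least $\frac{x+1}{2}$ of $x$ duels against $j$; we take $x$ odd, so there are no ties. With this $M'$ the transition probabilities have exactly the form~(\ref{form_trans_prob}). The stationary distribution is therefore again given by the formula in the proof of Lemma~\ref{lem:bin:stationary_dist_bounds}:
\begin{align*}
\pi_{i^*} = \frac{1}{1+\sum_{i\neq i^*} \frac{M'(i,i^*)}{M'(i^*,i)}}.
\end{align*}
So everything reduces to bounding $p' := \max_{i\neq i^*} M'(i,i^*)$. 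Then Lemma~\ref{lem:bin:stationary_dist_bounds}, applied with $p_l = p'$, gives $\pi_{i^*} \geq \frac{1-p'}{1-p'+(n-1)p'} \geq 1-(n-1)\frac{p'}{1-p'}$.

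Next I bound $p'$ with Lemma~\ref{Lem:TwoArmsWinsLowerBound}. Since $M(i^*,i) = \frac{u_{i^*}}{u_{i^*}+u_i} \geq \frac12+\delta$, we have $\frac{u_{i^*}-u_i}{u_{i^*}+u_i} = 2M(i^*,i)-1 \geq 2\delta$. The lemma then gives $M'(i,i^*) \leq \exp\!\left(-\frac{x(2\delta)^2}{2}\right) = e^{-2x\delta^2}$ for every $i \neq i^*$. Its precondition $\frac{u_{i^*}}{u_{i^*}+u_i} > \frac{x+1}{2x}$ reduces to $\delta > \frac{1}{2x}$, which holds for the stated $x$ once $n$ is large, because $x\delta \geq \frac{1}{\delta}\ln n \cdot \Theta(1)$ and $\delta<1/2$.

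For part~1, plug in $x_1$: then $2x_1\delta^2 = \ln(n/o(1))$, so $p' \leq o(1)/n$. This is case~1 of Corollary~\ref{cor:bin:stationary_dist_bounds}, except that we only have inequalities. I therefore use the lower bound of Lemma~\ref{lem:bin:stationary_dist_bounds} directly: $\pi_{i^*} \geq 1-(n-1)\frac{o(1)/n}{1-o(1)} = 1-o(1)$. For part~2, $x_2$ gives $2x_2\delta^2 = 2\ln n$, so $p' \leq n^{-2}$ and hence $\pi_{i^*} \geq 1-O(1/n)$. (This corresponds to case~2 of Corollary~\ref{cor:bin:stationary_dist_bounds}.)

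The main obstacle is the matching lower-order term in part~2, i.e.\ that $1-\pi_{i^*}$ is $\Omega(1/n)$ and not smaller. Hoeffding only gives an upper bound on $p'$, so for a $\Theta$ statement I would need a reverse tail bound. The cleanest option is a binomial anti-concentration estimate showing $M'(i,i^*) = n^{-2+o(1)}$, for instance by lower bounding the single term $\binom{x}{(x+1)/2}$ in the binomial sum, together with the upper bound of Lemma~\ref{lem:bin:stationary_dist_bounds}. That estimate only works up to polynomial factors in $x$, however. So I expect the honest statement to be $1-\pi_{i^*} = O(1/n)$, with $\Theta$ holding in the regime where the Hoeffding bound is tight up to constants, and I would phrase the argument as achieving the bound of case~2 of Corollary~\ref{cor:bin:stationary_dist_bounds}.
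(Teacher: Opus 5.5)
Your route is the paper's route. The paper's one-line justification plugs $1-2p=2\delta$ and the failure probabilities $\epsilon=o(1)/n$ and $\epsilon=1/n^2$ into Theorem~\ref{Thm:NecessaryNumberDuelsTwoArms} via (\ref{eq:lower-bound-x-rho}). This reproduces $x_1$ and $x_2$, and the paper then appeals to Corollary~\ref{cor:bin:stationary_dist_bounds}. That is the same as your bound $M'(i,i^*)\le e^{-2x\delta^2}$ followed by Lemma~\ref{lem:bin:stationary_dist_bounds}. Your treatment is more careful than the paper's. Case~2 of Corollary~\ref{cor:bin:stationary_dist_bounds} assumes $M(i^*,i)=1-p_2$ with equality and $p_2=\Theta(1/n^2)$, whereas boosting only yields $M'(i^*,i)\ge 1-n^{-2}$. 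So only $\pi_{i^*,2}\ge 1-\BigO{1/n}$ follows, exactly as you say.

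Your doubt about the $\Theta$ is justified, and no anti-concentration estimate can repair it, because that part of the statement is false. The Chernoff bound with the exact exponent gives $M'(i,i^*)\le \exp\bigl(-x\,\mathrm{KL}(\tfrac12\,\|\,\tfrac12-\delta)\bigr)=(1-4\delta^2)^{x/2}$. Using $-\ln(1-y)\ge y+y^2/2$ and $x=x_2=\delta^{-2}\ln n$, this is at most $n^{-2-4\delta^2}$. Hence for constant $\delta$ we get $1-\pi_{i^*}=\BigO{n^{-1-4\delta^2}}=o(1/n)$. In particular, your guess $M'(i,i^*)=n^{-2+o(1)}$ fails: Hoeffding's exponent $2\delta^2$ is not tight here. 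Part~2 should be stated as $\pi_{i^*,2}\ge 1-\BigO{1/n}$, which is what you prove.

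One line of your proposal should be dropped. For $n\ge 3$, the displayed closed form for $\pi_{i^*}$ does not hold for $M'$. Majority-of-$x$ probabilities are no longer of Bradley--Terry form, so the boosted chain need not be reversible. For example, with $x=3$, the ratio in Example~\ref{ex:BestOf3Duels} is $g(u_1/u_2)$ with $g(r)=r^2(3+r)/(1+3r)$. Kolmogorov's cycle criterion for utilities $1,2,4$ would require $g(2)^2=g(4)$, but $400/49\neq 112/13$. This flaw is inherited from the proof of Lemma~\ref{lem:bin:stationary_dist_bounds}, which uses detailed balance for a general $M$. The bounds you actually use survive, because they need only global balance at $i^*$. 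The identity $\pi_{i^*}\sum_{k\neq i^*}M'(k,i^*)=\sum_{j\neq i^*}\pi_j M'(i^*,j)$ gives $\pi_{i^*}(n-1)p'\ge (1-\pi_{i^*})(1-p')$, which is the lemma's lower bound. Your argument is therefore complete for part~1 and for the one-sided version of part~2.
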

\subsection{Comparison of $n$ arms}\label{seq:PLBoostingNArms}
Let us now consider the case of a duel of more than $2$ arms. To be precise, we assume a fixed, but random set $S \subseteq \firstInts{n}$ of arms that will be pulled in $x \in \mathbb{N}$ duels.
\begin{theorem}\label{thm:comparisonNarms}
    Let $S:=\{1,\dots,|S|\}$ be a set of arms with underlying Plackett-Luce model parameterized by $\{u_1, \dots, u_{|S|}\}$. Then we can bound the probability that arm $i \in S$ is in expectation the best arm after $x\in \mathbb{N}$ duels by
    \begin{align*}
        \mathrm{Pr}[i &\text{ best arm in $x$ duels}]\\
        &~~\leq \sum_{d = \left\lfloor\frac{x}{|S|}\right\rfloor + 1}^{\left\lfloor\frac{x}{2}\right\rfloor} \Bigg[\binom{x}{d} \left(\frac{u_i}{\sum_{l \in S}u_l}\right)^d \left(\frac{\sum_{l \in S}u_l - u_i}{\sum_{l \in S}u_l}\right)^{x-d} \\
        &~~\prod_{j \in S\backslash\{i\}}\exp\left(-(x-d)\text{KL}\left(\frac{d-1}{x-d} || \frac{u_i}{\sum_{l \in S}u_l} \right)\right) \\
        &~~\cdot \left(\frac{\sum_{l \in S}u_l-u_i-u_j}{\sum_{l \in S}u_l- u_j}\right)^{x-2d+1} \Bigg]\\
        &~~+ \exp\left(-x \text{KL}\left(\frac{x-\lfloor x/2\rfloor -1}{x} || \frac{\sum_{l \in S}u_l - u_i}{\sum_{l \in S}u_l}\right)\right).
    \end{align*}
    and
    \begin{align*}
        \mathrm{Pr}[i &\text{ best arm in $x$ duels}]\\
        &~~\geq \sum_{d = \left\lfloor\frac{x}{|S|}\right\rfloor + 1}^{\left\lfloor\frac{x}{2}\right\rfloor} \Bigg[\binom{x}{d} \left(\frac{u_i}{\sum_{l \in S}u_l}\right)^d \left(\frac{\sum_{l \in S}u_l - u_i}{\sum_{l \in S}u_l}\right)^{x-d} \\
        &~~\prod_{j \in S\backslash\{i\}}\frac{1}{\sqrt{8(d-1)(1-\frac{d-1}{x-d})}}\exp\left(-(x-d)\text{KL}\left(\frac{d-1}{x-d} || \frac{u_i}{\sum_{l \in S}u_l} \right)\right) \\
        &~~\cdot \left(\frac{\sum_{l \in S}u_l-u_i-u_j}{\sum_{l \in S}u_l- u_j}\right)^{x-d} \Bigg] + \frac{1}{\sqrt{8(x-\lfloor\frac{x}{2}\rfloor-1)(1-\frac{x-\lfloor\frac{x}{2}\rfloor-1}{x}}}\\
        &~~\cdot \exp\left(-x \text{KL}\left(\frac{x-\lfloor x/2\rfloor -1}{x} || \frac{\sum_{l \in S}u_l - u_i}{\sum_{l \in S}u_l}\right)\right).
    \end{align*}
\end{theorem}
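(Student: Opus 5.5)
We condition on the number $d$ of duels won by arm $i$. Write $U=\sum_{l\in S}u_l$ and $q_i=u_i/U$. In each of the $x$ independent duels, arm $i$ wins with probability $q_i$. Hence the number of wins $D_i$ of $i$ is $\mathrm{Bin}(x,q_i)$, and
\begin{align*}
\Prob{D_i=d}=\binom{x}{d}q_i^d(1-q_i)^{x-d}.
\end{align*}
For $i$ to be the (strict) best arm it must win at least $\lfloor x/|S|\rfloor+1$ duels, since otherwise some other arm has at least as many wins by pigeonhole. We split the range of $d$ at $\lfloor x/2\rfloor$. For $d>\lfloor x/2\rfloor$, arm $i$ is automatically best. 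The probability of this event is a binomial tail, which we bound above by the Chernoff--KL bound $\exp(-x\,\mathrm{KL}(\cdot\|\cdot))$ applied to the complementary count $x-D_i\sim\mathrm{Bin}(x,1-q_i)$. We bound it below by the standard anti-concentration estimate $\frac{1}{\sqrt{8k(1-k/x)}}\exp(-x\,\mathrm{KL}(k/x\|1-q_i))$ with $k=x-\lfloor x/2\rfloor-1$. These two estimates give the final additive terms in both bounds.

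For $\lfloor x/|S|\rfloor<d\le\lfloor x/2\rfloor$, we condition on $D_i=d$. Under this conditioning, the remaining $x-d$ duels are distributed among $S\setminus\{i\}$ by a multinomial law with probabilities $u_j/(U-u_i)$. Arm $i$ is best iff every $j\neq i$ wins at most $d-1$ of these. We plan to handle the joint event $\bigcap_j\{D_j\le d-1\}$ by a product over $j$. For the upper bound we use negative association of multinomial counts, which gives $\Prob{\bigcap_j \{D_j\le d-1\}}\le\prod_j\Prob{D_j\le d-1}$. For the lower bound we use a sequential conditioning argument: revealing $D_j$ one arm at a time, we lower-bound each conditional probability by an explicit binomial term.

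Each single-arm factor is then estimated as follows. Conditioned on the other counts, $D_j$ is binomial. We bound the event that $j$ stays below $d$ by a KL-type tail term $\exp(-(x-d)\mathrm{KL}(\frac{d-1}{x-d}\|q_i))$, matching the statement. On the lower side we include the reverse factor $\frac{1}{\sqrt{8(d-1)(1-\frac{d-1}{x-d})}}$. We also get a probability factor $\bigl(\frac{U-u_i-u_j}{U-u_j}\bigr)^{m}$ for the duels that must go to arms other than $j$. For the upper bound we take $m=x-2d+1$, the minimum number of duels lost by $j$ when $D_j\le d-1$. For the lower bound we take $m=x-d$. Multiplying by $\Prob{D_i=d}$ and summing over $d$ gives the bracketed sums. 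We will take the KL arguments exactly as in the statement and check only that the ratios lie in $(0,1)$ on the summation range, which holds since $d-1<x-d$ there.

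The main obstacle is the decoupling step for the joint event over all $j\neq i$. The multinomial counts are dependent, and the product form requires either negative association (upper bound) or a careful chain of conditional binomials (lower bound). It must also be made consistent with the specific KL parameters and the correction factors $\bigl(\frac{U-u_i-u_j}{U-u_j}\bigr)^{\cdot}$ appearing in the statement. We would first try to establish the upper bound via negative association. For the lower bound we would use a crude conditioning argument in which each factor is lower bounded using the standard Stirling-based reverse Chernoff inequality. We would accept possible looseness in the constants, since only the stated form needs to be reached.
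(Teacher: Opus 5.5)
\textbf{Your plan has a genuine gap, and the statement itself is false in both directions.} The gap sits where you propose to ``take the KL arguments exactly as in the statement'': those steps cannot be carried out.

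\textbf{Upper bound, additive term.} The Chernoff--KL bound $\Pr[\mathrm{Bin}(n,p)\le k]\le\exp(-n\,\mathrm{KL}(k/n\|p))$ requires $k/n\le p$. You apply it to $x-D_i\sim\mathrm{Bin}(x,1-q_i)$ with $k=x-\lfloor x/2\rfloor-1\approx x/2$. That is legitimate only when $q_i$ is at most about $1/2$, so it fails precisely for a strong arm. Take $|S|=3$, $u=(18,1,1)$, $i=1$, $x=3$. The $d$-sum (from $2$ to $1$) is empty, and arm $1$ is best iff it wins at least two of the three duels, which has probability $0.972$. Yet the claimed upper bound is $\exp(-3\,\mathrm{KL}(\frac13\|\frac1{10}))=0.3\cdot 1.35^2\approx 0.547$. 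More generally, for any fixed $u_i>U/2$ the right-hand side tends to $0$ as $x\to\infty$, while the probability tends to $1$.

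\textbf{Upper bound, per-arm factor.} This step is worse, for three reasons. First, given $D_i=d$ one has $D_j\sim\mathrm{Bin}(x-d,u_j/(U-u_i))$, so nothing produces a KL with second argument $q_i$. Second, for a weak arm $j$ the event $D_j\le d-1$ is the likely one, so it admits no bound exponentially small in $x-d$. Third, an upper bound cannot simply acquire the factor $r_j^{x-2d+1}$ with $r_j=\frac{U-u_i-u_j}{U-u_j}<1$. That factor comes from writing $\Pr[D_j=k\mid D_i=d]=\binom{x-d}{k}(\frac{u_j}{U})^k(\frac{U-u_j}{U})^{x-d-k}r_j^{x-d-k}/(1-q_i)^{x-d}$. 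Once you do that, the normalizer $(1-q_i)^{-(x-d)}>1$ must be kept, and the statement omits it. Your negative-association step is correct, but it repairs none of this.

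\textbf{Lower bound.} No term-by-term positive lower bound on $\Pr[\forall j\neq i:\ D_j\le d-1\mid D_i=d]$ is possible when $(|S|-1)(d-1)<x-d$, because that event is empty. For $|S|=3$, $x=5$, $d=2$, the other two arms share three wins, so the last factor in your sequential conditioning is $0$. The statement's $d=2$ term is nevertheless positive. For $u=(1,4.5,4.5)$, $i=1$ (KL in nats), the stated lower bound is about $0.00123+0.00756=0.00879$. This exceeds the true probability $\Pr[\mathrm{Bin}(5,0.1)\ge 3]=0.00856$ that arm $1$ has strictly the most wins.

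\textbf{Comparison with the paper's proof.} The paper follows essentially your outline: condition on $d$, take a product over $j$, rewrite the multinomial term to produce $r_j$, then apply the binomial tail bounds of the cited note. It does not supply the missing ingredient either:
\begin{itemize}
\item it asserts the product over $j$ as an equality, where your negative-association remark gives the correct inequality in the upper direction;
\item it uses the tail bounds without their range conditions;
\item it puts $q_i$ where the parameter of $D_j$ belongs;
\item it drops the normalizer.
\end{itemize}
The statement has to be corrected before any proof can go through.
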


Note that for the case that arm $i$ wins more often than any other arm a different number of wins can be sufficient, depending on the quality and the number of wins of the other competitors. If the probability to win a duel is equally distributed over the competitors, arm $i$ can already be the one with the most wins if it wins $\left\lfloor \frac{x}{|S|}\right\rfloor +1$ times. On the other hand, if the second-best arm wins all other duels except the ones arm $i$ wins, arm $i$ needs $\left\lfloor \frac{x}{2} \right\rfloor +1$ wins to get identified as the best arm. Reasoning by that, we get a sum of the possible wins of arm $i$ in the lower and upper bounds for the winning probabilities, and no closed form. The detailed proof can be found in the appendix. One can see the development of these bounds for the probability that arm $i$ wins most of the time for different underlying Plackett-Luce utilities over multiple duels $x$ in Figure \ref{fig:ProbabilityBoundsBestArmNArms}.
\begin{figure}[ht!]
    \includegraphics[width=\columnwidth]{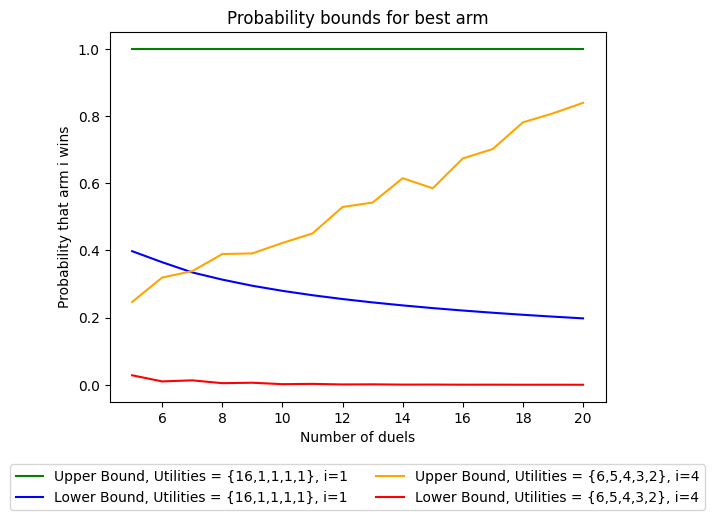}
    \caption{Plot of the upper and lower bounds on the probability that arm $i$ is the best in expectation for different arms and instantiations of the Plackett-Luce model.}
    \label{fig:ProbabilityBoundsBestArmNArms}
    \Description[Upper bound of probability that arm $i$ is the best in expectation converges to $1$ for larger number of duels.]{Upper bound of probability that arm $i$ is the best in expectation converges to $1$ for larger number of duels. Analogously the lower bound converges to $0$. This occurs faster for larger differences of the utilities.}
\end{figure}
\section{Stochastic Queries: Ant Colony Optimization}
\label{sec:aco_analysis}

\newcommand{\taumin}{\tau_{\mathrm{min}}}

In previous chapters, we studied the distribution of evolutionary algorithms over the set of arms. A class of algorithms which provides such a distribution explicitly are the estimation of distribution algorithms (EDAs).

We consider the following algorithm, an adaptation of a specific EDA called MMAS-ib~\cite{DBLP:series/ncs/KrejcaW20} (Max-Min Ant-System with iteration best updated) to the setting of Condorcet winner search. We maintain a vector $\tau \in [0,1]^n$ of \emph{pheromones} or \emph{marginal probabilities}. These are getting updated according to an update scheme depending on a \emph{learning rate} $\rho$.

For all $\tau \in [0,1]^n$ with $\sum_{i=1}^n \tau_i =1$, we let $R(\tau)$ denote the random distribution on $\firstInts{n}$ such that, for all $i \in \firstInts{n}$, it holds that $\Prob{R(\tau) = i} =\tau_i$. Intuitively, each arm $i$ has a pheromone value $\tau_i$ associated with it, representing the probability of this arm being chosen. We use this distribution to sample, in each iteration, two arms to compare.

Initially, all arms $i$ have the same marginal probability $\tau_i = 1/n$. After each iteration, all marginal probabilities are diminished by a factor of $(1-\rho)$ (known as \emph{pheromone evaporation}); only the winner of this iteration gets an increase of its marginal probability by $\rho$. In order to prevent premature convergence, we enforce a lower bound of $\taumin$ for all pheromone values.\footnote{We do not explicitly enforce an upper bound, since the lower bound is sufficient to ensure exploration of arms.} Note that this is crucial, since the starting probabilities are so low that there is a non-zero chance to never sample an arm otherwise. We define the pheromone update function formally as follows.
$$
\mathrm{update}(\tau,i^*)_i = \begin{cases}
    \max(\tau_i(1-\rho),\taumin), &\mathrm{if }\;i \neq i^*;\\
    \tau_i(1-\rho) + \rho, &\mathrm{otherwise.}    
\end{cases}
$$
Furthermore, we use a function normalize$(\tau)$ to normalize a vector of pheromone values to sum up to $1$ by dividing each entry by the total sum of the vector. Note that the update scheme ensures that, as long as the bound $\taumin$ is not reached, no normalization is necessary. Algorithm~\ref{alg_ACO_Condorcet} shows the pseudo code for MMAS-ib.

\begin{algorithm}[htp]
    \textbf{Parameters:} $n \in \natnum$, $\rho \in (0,1/2)$, $\taumin \in (0,1/2)$\;
    For $i \in \firstInts{n}$: $\tau_i^0$ $\assign$ $1/n$\;
    \For{$t=0$ \KwTo $\infty$}{
        $i$ $\sim$ $R(\tau^t)$\;
        $j$ $\sim$ $R(\tau^t)$\;
        $i^*$ $\assign$ Query$(i,j)$\;
        $\tau^{t+1}$ $\assign$ normalize$($update$(\tau^{t+1},i^*))$\;
    }
    \caption{MMAS-ib, Condorcet Winner Search}
    \label{alg_ACO_Condorcet}
\end{algorithm}

We start with a lemma quantifying the impact of normalization.

\begin{lemma}\label{lem:updateRuleLowerBound}
    Let $n \in \natnum$ and $\taumin \in (0,1/2)$ be given. For any pheromone vector $\tau \in [\taumin/(1+\rho),1]^n$ with sum $1$ and any $i^* \in \firstInts{n}$ we have that the sum of update$(\tau^{t+1},i^*))$ is at most $1+2\taumin\rho n$. In particular, every component of 
    $$
    \mathrm{normalize}(\mathrm{update}(\tau^{t+1},i^*))
    $$
    is lower bounded by $\taumin/(1+2\taumin \rho n)$.
\end{lemma}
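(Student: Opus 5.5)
Write $\tau' = \mathrm{update}(\tau,i^*)$ and compute its sum componentwise. Before the lower bound is enforced, the update gives $\tau_i(1-\rho)$ for $i\neq i^*$ and $\tau_{i^*}(1-\rho)+\rho$. The sum of these is $(1-\rho)\sum_i\tau_i + \rho = 1$. Hence the only excess over $1$ comes from components where the maximum with $\taumin$ is active. Each such component contributes an excess of $\taumin - \tau_i(1-\rho)$, which is at most $\taumin$. So it remains to bound this excess per component more sharply, so that the total is at most $2\taumin\rho n$.

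For the per-component bound, use the hypothesis $\tau_i \geq \taumin/(1+\rho)$. For every $i\neq i^*$ this gives
\begin{align*}
\taumin - \tau_i(1-\rho) \leq \taumin - \taumin\frac{1-\rho}{1+\rho} = \taumin\frac{2\rho}{1+\rho} \leq 2\taumin\rho .
\end{align*}
The component $i^*$ never triggers the maximum, so it contributes no excess. Summing over at most $n-1$ components, the total excess is at most $2\taumin\rho(n-1) \leq 2\taumin\rho n$. This proves that the sum of $\tau'$ is at most $1+2\taumin\rho n$.

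For the second claim, observe that every component of $\tau'$ is at least $\taumin$. For $i\neq i^*$ this holds because of the maximum. For $i^*$ it holds because $\tau_{i^*}(1-\rho)+\rho \geq \rho$, and we also have $\tau_{i^*}(1-\rho)+\rho\geq \taumin$, using $\tau_{i^*}\geq \taumin/(1+\rho)$ together with $\rho\geq\rho\,\taumin$. Normalizing divides every component by the sum, which is at most $1+2\taumin\rho n$ by the first claim. Hence every component of $\mathrm{normalize}(\tau')$ is at least $\taumin/(1+2\taumin\rho n)$.

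The main point needing care is the per-component excess bound. It is exactly where the hypothesis $\tau\in[\taumin/(1+\rho),1]^n$ is used rather than $\tau\geq\taumin$. The hypothesis is needed because after normalization the vector may lie slightly below $\taumin$. I expect this step to need the most care, specifically in checking that $2\taumin\rho n\le\rho$ is not needed anywhere. Everything else is bookkeeping.
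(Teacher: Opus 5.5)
Your proposal is correct and follows the paper's argument: you bound the extra mass the clamp adds to each component by $\tau_{\mathrm{min}} - \tau_{\mathrm{min}}(1-\rho)/(1+\rho) = 2\tau_{\mathrm{min}}\rho/(1+\rho)$ and sum over the components. You are also more careful than the paper in noting that the unclamped update sums to exactly $1$ and in checking that the $i^*$ component ends up at least $\tau_{\mathrm{min}}$. One small repair is needed in that last check: $\rho\ge\rho\,\tau_{\mathrm{min}}$ only yields $\tau_{\mathrm{min}}(1+\rho^2)/(1+\rho)<\tau_{\mathrm{min}}$. What you actually need is $\rho \ge 2\tau_{\mathrm{min}}\rho/(1+\rho)$, i.e.\ $2\tau_{\mathrm{min}}\le 1+\rho$, which holds because $\tau_{\mathrm{min}}<1/2$.
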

\begin{proof}
    Before the update, each component is lower bounded by $\taumin/(1+\rho)$, so after pheromone evaporation, each is lower bounded by $\taumin(1-\rho)/(1+\rho)$. Thus, the margin enforced by the pheromone update rule increases each component by up to
    $$
    \taumin - \taumin(1-\rho)/(1+\rho) = \taumin\frac{1+\rho-(1-\rho)}{1+\rho} = \frac{2\taumin\rho}{1+\rho}.
    $$
    Thus, summing over all $n$ components and neglecting the division by $1+\rho$, the sum is increased by the update rule by at most $2\taumin \rho n$.
\end{proof}

In passing, we note that the MMAS-ib will, with positive probability, encounter the lower bound with the pheromone value of any fixed arm, regardless of this arm's competitiveness and regardless of how low the pheromone bound $\taumin$ is.

\begin{proposition}
    Let $n \in \natnum$ and $\rho,\taumin \in (0,1/2)$ be given. Consider the MMAS-ib variant for the Condorcet winner search given in Algorithm~\ref{alg_ACO_Condorcet} and let $i \in \firstInts{n}$ be given. Then the probability that $i$ is not chosen in any iteration before the pheromone associated with $i$ attains the lower bound $\taumin$ is at least $\exp(-2/(n\rho))$.
\end{proposition}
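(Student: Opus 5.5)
The plan is to lower-bound the probability of the event $E$ that arm $i$ is never sampled (as either of the two arms $i,j \sim R(\tau^t)$) during the first $T$ iterations. Here $T$ is the number of iterations needed for pure evaporation to drive $\tau_i$ from $1/n$ down to $\taumin$. On $E$, arm $i$ never wins a duel, so in each of these iterations its pheromone is multiplied by $(1-\rho)$, until the lower bound becomes active.

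The first step is to check that no upward renormalization disturbs this picture before the bound is reached. As long as no component has hit $\taumin$, the update preserves the total sum $1$: the mass lost is $\rho\sum_k\tau_k=\rho$ and the mass gained is $\rho$. So normalization is trivial and, on $E$, $\tau_i^t=(1-\rho)^t/n$ exactly. Once other components reach $\taumin$, Lemma~\ref{lem:updateRuleLowerBound} shows that normalization only divides by a number at least $1$. That can only make $\tau_i$ smaller, which helps both the decay and the non-sampling probability. Hence on $E$ we have, for all relevant $t$, $\tau_i^t\le(1-\rho)^t/n$. It also follows that $i$ attains the bound after some finite $T$; we do not even need to know $T$ precisely.

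The second step multiplies the per-iteration probabilities. Conditioned on the history and on $i$ not having been sampled yet, the probability that $i$ is not sampled in iteration $t$ is $(1-\tau_i^t)^2\ge(1-(1-\rho)^t/n)^2$. Therefore
\begin{align*}
\Prob{E}\ \ge\ \prod_{t\ge 0}\Bigl(1-\tfrac{(1-\rho)^t}{n}\Bigr)^2 .
\end{align*}
Extending the product to all $t\ge0$ only decreases it, since every factor is at most $1$.

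The third step, which is the main technical point, is bounding this infinite product by $\exp(-2/(n\rho))$. The naive inequality $1-x\ge e^{-x}$ points the wrong way. I would instead use $1-x\ge e^{-x/(1-x)}$, or a variant adapted to the exact constant. A cleaner route is to note that the exponent is
\begin{align*}
2\sum_t -\ln\Bigl(1-\tfrac{(1-\rho)^t}{n}\Bigr),
\end{align*}
which, by concavity/convexity of $-\ln(1-x)$ on $[0,1/n]$, satisfies $-\ln(1-x)\le x\cdot n\ln\frac{n}{n-1}$ there. Since $\sum_t(1-\rho)^t/n=1/(n\rho)$, this gives an exponent of at most $\frac{2}{n\rho}\cdot n\ln\frac{n}{n-1}$.

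This has an extra factor $n\ln\frac{n}{n-1}\le 1+O(1/n)$ (equal to $2\ln 2$ for $n=2$). To obtain exactly the stated $\exp(-2/(n\rho))$, I would try to absorb this factor by one of two refinements: treat the $t=0$ term separately, or use that the two samples within one iteration are drawn from the same distribution and write the factor as $1-2\tau+\tau^2$. If the exact constant resists, the fallback is to state the bound with the $n\ln\frac{n}{n-1}$ factor, which is $\exp(-(2+o(1))/(n\rho))$. That still yields the qualitative claim: a positive probability, independent of $\taumin$ and of the arm's competitiveness.
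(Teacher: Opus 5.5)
Your steps 1 and 2 are sound, and more careful than the paper. You correctly charge $(1-\tau_i^t)^2$ for the two draws $i,j\sim R(\tau^t)$ per iteration, and you handle renormalization. (The divisor is at least $1$ directly because $\max(\tau_k(1-\rho),\tau_{\mathrm{min}})\ge\tau_k(1-\rho)$; Lemma~\ref{lem:updateRuleLowerBound} bounds it from above, not below.) So your fallback $\exp\bigl(-2\ln\frac{n}{n-1}/\rho\bigr)$ is a valid proof for $n\ge 2$; for $n=1$ the probability is $0$.

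What cannot work is the plan to absorb the factor $n\ln\frac{n}{n-1}$: with two draws per iteration, the bound $\exp(-2/(n\rho))$ is false in general. Write $x_t=(1-\rho)^t/n$ and let $T$ be the first $t$ with $x_t\le\tau_{\mathrm{min}}$. For $n=2$, on your event the other arm's pheromone only grows, so no renormalization occurs and $\Prob{E}=\prod_{t<T}(1-x_t)^2$ exactly. Use $\ln(1-x)\le -x-x^2/2$, $\sum_{t<T}x_t=\frac{1}{n\rho}-\frac{x_T}{\rho}$ with $x_T\le\tau_{\mathrm{min}}$, and $\sum_{t<T}x_t^2\ge\frac{1-n^2\tau_{\mathrm{min}}^2}{2n^2\rho}$. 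You get
\[
\ln\Prob{E}\le-\frac{2}{n\rho}+\frac{1}{\rho}\Bigl(2\tau_{\mathrm{min}}-\frac{1-4\tau_{\mathrm{min}}^2}{8}\Bigr),
\]
which is strictly below $-2/(n\rho)$ for every $\rho$ as soon as $\tau_{\mathrm{min}}\le 1/20$. For example, $\rho=\tau_{\mathrm{min}}=1/100$ gives at most about $e^{-110}$ rather than $e^{-100}$. The same happens if ``chosen'' is read as ``wins'' and $i$ wins its duels with probability close to $1$.

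The paper's proof reaches $\exp(-2/(n\rho))$ by multiplying a single factor $1-(1-\rho)^t/n$ per iteration, i.e.\ it overlooks the second draw. It then applies $1-x\ge e^{-2x}$ on $(0,1/2)$, so the $2$ in its exponent comes from this crude inequality, not from the two draws. Repaired, that route yields only $\exp(-4/(n\rho))$, and your fallback is stronger, since $n\ln\frac{n}{n-1}\le 2\ln 2<2$ for $n\ge 2$. So drop the attempted refinements and state your fallback, noting that the proposition's constant must be weakened. The qualitative conclusion (a positive probability independent of $\tau_{\mathrm{min}}$ and of the arm's competitiveness) is unaffected.
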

\begin{proof}
    We ignore the lower bound of $\taumin$ and compute the probability of never choosing $i$. After $t$ iterations, the pheromone value will be $(1/n)(1-\rho)^t$. Thus, using $\forall x \in (0,1/2):\exp(-2x) \leq 1-x$, the probability of not choosing $i$ in any iteration is
    \begin{align*}
    \prod_{t=0}^\infty \left(1-\frac{(1-\rho)^t}{n}\right) &\geq \prod_{t=0}^\infty \exp\left(-2\frac{(1-\rho)^t}{n}\right) \\
    & = \exp\left(-2/n \; \sum_{t=0}^\infty (1-\rho)^t\right)\\
    & = \exp\left(-2/(n\rho) \right).
    \end{align*}
    Since with this at least probability the pheromone value decreases to $0$ in the limit, this probability is a lower bound of the algorithm ever attaining $\taumin$ with the pheromone value for $i$.
    %
\end{proof}

As the next theorem shows, MMAS-ib identifies a Condorcet winner already from a small bias. Note that an upper bound for any pheromone value is $1-(n-1)\taumin/(1+\rho)$, since all other pheromone values observe a lower bound.

\begin{theorem}
    Let $n \in \natnum$, $\taumin,\rho \in (0,1/n)$ with $\taumin \leq 1/(10n)$.
    Consider the MMAS-ib variant for the Condorcet winner search given in Algorithm~\ref{alg_ACO_Condorcet}. Let $S = \firstInts{n}$ be the set of options with $i^* \in S$ the Condorcet winner. Suppose there is $p \in [0,1/4]$ such that 
    $$
    \forall i \in S \setminus \{i^*\}: M(i^*,i) \geq 1-p.
    $$
    Then it takes the algorithm an expected time of 
    $$
    \BigO{\frac{1}{\taumin\rho} + \frac{\log(1/p)}{\rho}}
    $$
    until reaching a marginal probability of at least $1-3p-n\taumin$ for the Condorcet winner for the first time.
\end{theorem}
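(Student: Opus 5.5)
We track the pheromone $X_t := \tau^t_{i^*}$ of the Condorcet winner and analyse its drift in two phases. Write $q := 1 - X_t$ for the total mass on the other arms. In one iteration, $i^*$ is the winner of the duel if it is sampled at least once and wins, or (impossibly) otherwise. Its probability of being sampled twice is $X_t^2$, and its probability of being sampled exactly once is $2X_tq$. Conditioned on being sampled once, it wins with probability at least $1-p$. Hence
\begin{align*}
\Prob{i^* \text{ wins}} \geq X_t^2 + 2X_t q(1-p) = X_t\bigl(1 + q - 2pq\bigr).
\end{align*}
Ignoring normalization, the update gives $\Ex{X_{t+1}-X_t \mid X_t} = \rho(\Prob{i^*\text{ wins}} - X_t) \geq \rho X_t q(1-2p) \geq \rho X_t q/2$, using $p \le 1/4$.

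We must also account for the normalization and the lower bound $\taumin$. By Lemma~\ref{lem:updateRuleLowerBound}, the sum after the update is at most $1+2\taumin\rho n$, so normalization costs $X_t$ at most a factor $1-2\taumin\rho n$. This is a loss of at most $2\taumin \rho n X_t \le \rho X_t/5$. As long as $q \ge 3p + n\taumin$ (i.e.\ before the target is reached), this loss is dominated by the positive drift. A more careful version should use the exact sum $q(1-2p)$ versus the cost $2\taumin n$ and exploit $q\ge 3p+n\taumin$. This yields a multiplicative-type drift $\Ex{X_{t+1}-X_t\mid X_t} \geq c\rho X_t(1-X_t)$ for a constant $c>0$, valid throughout the region $X_t \le 1-3p-n\taumin$.

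\textbf{Phase 1.} Bring $X_t$ from as low as $\taumin/(1+2\taumin\rho n)$ up to $1/2$. Here the drift is $\ge c\rho X_t/2$, i.e.\ multiplicative growth. I would apply a variable drift theorem (from \cite{kötzing2024theorystochasticdrift}) with $h(x) = c\rho x/2$. The time $\int_{\taumin}^{1/2} dx/h(x)$ gives only $O(\log(1/\taumin)/\rho)$, which is even better than the claimed $1/(\taumin\rho)$. The catch is that variable drift toward a target needs care with the step-size, since steps are of size up to $\rho$ while $X_t$ may be as small as $\taumin$. So the safer route is additive drift: the drift is at least $c\rho\taumin$ everywhere in this region, giving expected time $O(1/(\taumin\rho))$. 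This matches the first term of the bound, and I would take this route.

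\textbf{Phase 2.} From $X_t \ge 1/2$, consider the distance $Y_t = 1-X_t$. Its drift is $\le -c'\rho Y_t$ while $Y_t \ge 3p+n\taumin$, with the additive correction terms $2pq$ and the normalization absorbed, since $Y_t\ge 3p$. Multiplicative drift then gives $O(\log(1/(3p))/\rho) = O(\log(1/p)/\rho)$ to reach $Y_t \le 3p + n\taumin$.

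The main obstacle is that $X_t$ can fall back below $1/2$ during Phase 2. I would handle this with a restart argument: each restart costs at most the Phase 1 bound in expectation. For the number of restarts to be constant, I would either show by a negative-drift/Azuma argument that falling back has probability at most a constant $<1$, or, more simply, run a single potential (e.g.\ $g(Y)=\ln(1/Y)$ combined with the additive part) with variable drift over the whole range, so that no phases are needed. The step I expect to be most delicate is exactly this bookkeeping of normalization and the $\taumin$ floor near the target, where the constants $3p$ and $n\taumin$ must come out.
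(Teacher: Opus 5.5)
Your overall plan matches the paper's: additive drift of order $\tau_{\min}\rho$ while the Condorcet winner's pheromone is small, negative drift plus restarts against falling back, and multiplicative drift for the final approach. The gap is in the drift claim that both of your phases rest on, namely $\mathrm{E}[X_{t+1}-X_t\mid X_t]\ge c\rho X_t(1-X_t)$ on all of $X_t\le 1-3p-n\tau_{\min}$. Do the comparison you propose: normalize by the factor $1+s\rho$, with $s=2\tau_{\min}n$ from Lemma~\ref{lem:updateRuleLowerBound}, and use your winning probability $X_t(1+(1-2p)q)$. You get
\[
\mathrm{E}[X_{t+1}-X_t\mid X_t] \ \ge\ \frac{X_t\,\rho\,\bigl((1-2p)q-s\bigr)}{1+s\rho},\qquad q=1-X_t .
\]
The condition $q\ge 3p+n\tau_{\min}$ does not make the bracket positive. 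If, say, $p\le n\tau_{\min}/4$, then $(1-2p)(3p+n\tau_{\min})\le 1.75\,n\tau_{\min}<s$. So on the whole band $3p+n\tau_{\min}\le q\le s/(1-2p)$, which lies before your target, your tools give no positive drift at all. In Phase~2 this means $Y_t=1-X_t$ drifts multiplicatively towards the offset $s/(1-2p)$, not towards $0$. Your step ``normalization absorbed, since $Y_t\ge 3p$'' is exactly what fails, and the multiplicative drift theorem cannot be applied to $Y_t$ with target $3p+n\tau_{\min}$.

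The paper's repair is to subtract this offset first. It uses the potential $1-2p-s-\tau^t_{i^*}$, with the slightly weaker bound $\tau(2-\tau)(1-p)$ on the winning probability. It shows the expected decrease of this potential is at least $\tau\rho/2$ times its value, hence at least $\rho/20$ times its value once $\tau\ge 1/10$. It then runs multiplicative drift until the potential is at most $p$. With your sharper bound, $Z_t=1-\tau^t_{i^*}-s/(1-2p)$ works the same way, since $(1-2p)q-s=(1-2p)Z_t$.

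Either way, the threshold you reach is $1-3p-s=1-3p-2n\tau_{\min}$. This is also exactly what the paper's own proof delivers, so the $n\tau_{\min}$ in the statement is not what its argument establishes. Finer bookkeeping of the floor is unlikely to recover $n\tau_{\min}$ in general. The floor alone keeps $q\ge (n-1)\tau_{\min}/(1+s\rho)$, so in every round two non-Condorcet arms are paired with probability at least roughly $(n\tau_{\min})^2$. In the regime $\tau_{\min}=\rho=p=1/(10n)$ with $n$ large, the pheromone deposited by these winners keeps $q-(n-1)\tau_{\min}$ of constant order. That is far above the slack $\tau_{\min}+3p=O(1/n)$ the stated target allows. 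So aim for $1-3p-2n\tau_{\min}$.
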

\begin{proof}
Let $s = 2\taumin n$. Now Lemma~\ref{lem:updateRuleLowerBound} gives that all pheromone values are always lower bounded by $\taumin/(1+s\rho)$. Furthermore, we know that the algorithm normalizes with a factor at most $1+s\rho$.

We let, for all $t \in \natnum$, $X_t = 1-2p-s-\tau_{i^*}^t$. 
Let now $t$ be given and denote $\tau = \tau_{i^*}^t$ the ``current'' pheromone value on the Condorcet winner at the beginning of iteration $t$. Let $A$ be the event that, in iteration $t$, the arm $i^*$ is chosen and wins the comparison. We make the distinction whether arm $i^*$ is chosen as the first arm. If so, then it wins with probability at least $1-p$; if not, then there's a probability of $\tau$ to choose arm $i^*$ as the second arm. Overall, we have
$$
\Prob{A} \geq  (\tau + (1-\tau)\tau)(1-p).
$$

Now we get
\begin{align*}
    &\Ex{X_{t} - X_{t+1} \mid X_t}\\
    &= \tau_{i^*}^{t+1} - \tau\\
    &\geq (\tau(1-\rho) + \Prob{A} \rho)/(1+s\rho) - \tau\\
    &\geq (\tau(1-\rho) + (\tau + (1-\tau)\tau)(1-p) \rho - \tau - \tau s\rho)/(1+s\rho)\\
    &= (-\tau\rho-s\tau\rho + \tau\rho(1+1-\tau)(1-p))/(1+s\rho)\\ 
    &= \tau\rho((2-\tau)(1-p) - 1-s)/(1+s\rho)\\ 
    &= \tau\rho(1-s-2p - \tau + p\tau)/(1+s\rho)\\ 
    &= \tau\rho(X_t + p\tau)/(1+s\rho)\\ 
    &\geq \tau\rho X_t/2.
\end{align*}
For $\tau \in (\taumin/2,1/6)$, we can further lower bound $X_t$ by $3/10$ and thus obtain a drift of at least
$
\taumin \rho\cdot(3/40).
$
Note that the process cannot make steps larger than $\rho$. Thus, using the \emph{additive drift theorem} with overshooting~\cite{Krejca2019}, we see that the time until the process $X_t$ reaches a value of at least $1/6$ is in $\BigO{1/(\taumin\rho)}$.

Furthermore, for $\tau \in [1/10,1/6]$, the process $(X_t)_{t \in \natnum}$ exhibits negative drift towards $1$, of order $\Theta(\rho)$; recall that the process cannot make steps of size larger than $\rho$ by definition. Thus, the \emph{negative drift theorem}~\cite{Oliveto2011} gives that the process will not fall below~$1/10$ in a time exponential in $1/\rho$.

Finally, while $\tau \geq 1/10$, the process $(X_t)_{t \in \natnum}$ exhibits \emph{multiplicative drift} towards $0$ with factor $\rho/20$. Thus, we can use the multiplicative drift theorem \cite{10.1145/1830483.1830748} to see that first $t$ such that $X_t \leq p$ is, in expectation, at most $\BigO{\log(1/p)/\rho}$.

Taking these bounds together with a simple restart argument shows the theorem.
\end{proof}

Note that the proof also shows that the pheromone of the Condorcet winner will generally stay high, as there is drift towards these high values.
\section{Conclusion}

In this paper, we were interested in the performance of evolutionary algorithms in a setting of Multi-Armed Bandits, in order to shed light on the way such algorithms can explore and exploit knowledge obtained about the search space. As we saw, a simple \oneoneEA struggles with finding stochastic differences between options, even when one option is clearly better. In contrast, the cumulative nature of a simple EDA leads to a much better focus on a better option.

For heuristic search, the structure of the space to be searched is important, typically neighborhoods and local operators are defined. In contrast, our model for Multi-Armed Bandits assumes no relation between options. We consider this a first study for search heuristics in this setting and believe the settings of Combinatorial Bandits \cite{Mohajer2017, Wenbo2019, Wenbo2020, Chen2020, Saha2020, Haddenhorst2021} to be interesting for future work, incorporating the structure of the search space into the model.


\bibliographystyle{ACM-Reference-Format}
\bibliography{_ref.bib}

\section*{Acknowledgements}
This research was (partially) funded by the HPI Research School on Foundations of AI (FAI).

This research was supported by the Ministry of Culture and Science (NRW, Germany) as part of the Lamarr Fellow Network and by the Federal Ministry of Research, Technology and Space (BMFTR) under grant no. 16IS24057A. We also gratefully acknowledge funding from the European Research Council (ERC) under the ERC Synergy Grant Water-Futures (Grant agreement No. 951424). This publication reflects the views of the authors only.

\clearpage
\newpage

\appendix
\onecolumn
\section{Stochastic Queries}
\label{sect:app:condorcet-win-search}
\subsection{Insights on the Mixing Time and Coupling}
In the following, we define, according to \cite{Mitzenmacher2017ProbComp}, the mixing time of Markov chains and how to bound them using coupling.

\begin{definition}
    Let $\overline{\pi}$ be the stationary distribution of an ergodic Markov chain with state space S.
    Let $p_x^t$ represent the distribution of the state of the chain starting at state $x$ after $t$ steps.
    We define
    \begin{align*}
        \Delta_x(t) = || p_x^t - \overline{\pi}||; \qquad \Delta(t) = \max_{x \in S} \Delta_x(t)
    \end{align*}
    with the variation distance $||.||$.
    That is, $\Delta_x(t)$ is the variation distance between the stationary distribution and $p_x^t$ and $\Delta(t)$ is the maximum of these values over all states $x$.
    Let $\epsilon > 0$.
    We define
    \begin{align*}
        \tau_x(\epsilon) = \min \{t: \Delta_x(t) \leq \epsilon\}; \qquad \tau(\epsilon) = \max_{x \in S} \tau_x(\epsilon).
    \end{align*}
    That is, $\tau_x(\epsilon)$ is the first step $t$ at which the variation distance between $p_x^t$ and the stationary distribution is less than $\epsilon$, and $\tau(\epsilon)$ is the maximum of these values over all states $x$.
\end{definition}
As a function over $\epsilon$, $\tau(\epsilon)$ is called the \emph{mixing time} of the Markov chain.

Our next goal is to determine how the mixing time converges for small $\epsilon$, which results in the number of iterations the $(1+1)$ EA has to run to result in a distribution close to its unique stationary one.
Following \cite[Definition~11.3]{Mitzenmacher2017ProbComp}, we use the coupling technique of Markov chains to achieve that.

\begin{definition}
    A coupling of a Markov chain $M_t$ with state space~$S$ is a Markov chain $Z_t = (X_t, Y_t)$ on the state space $S \times S$ such that:
    \begin{align*}
        \PrMath{X_{t+1} = x' \mid Z_t = (x,y)} = \PrMath{M_{t+1} = x' \mid M_t = x}\\
        \PrMath{Y_{t+1} = y' \mid Z_t = (x,y)} = \PrMath{M_{t+1} = y' \mid M_t = y}.
    \end{align*}
\end{definition}

Using the defined coupling of a Markov Chain, it is possible to bound its mixing time with the following lemma in \cite[Lemma 11.2]{Mitzenmacher2017ProbComp}.

\begin{lemma}
    Let $Z_t = (X_t, Y_t)$ be a coupling of a Markov Chain on a state space $S$. Suppose that there exists a $T$ such that, for every $x, y \in S$
    \begin{align*}
        \PrMath{X_T \neq Y_T \mid X_0 = x, Y_0 = y} \leq \epsilon.
    \end{align*}
    Then $\tau(\epsilon) \leq T$.
    That is, for any initial state, the variation distance between the distribution of the state of the chain after T steps and the stationary distribution is at most $\epsilon$.
\end{lemma}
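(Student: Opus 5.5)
This lemma is the standard coupling lemma (Lemma 11.2 in Mitzenmacher--Upfal), restated in the appendix. The plan is to reduce it to the coupling inequality for total variation distance.

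Fix an arbitrary start state $x \in S$. Let $Y_0$ be drawn from the stationary distribution $\overline{\pi}$ rather than fixed. Run the coupling $Z_t=(X_t,Y_t)$ from $X_0=x$. By Definition~\ref{def:prelim:coupling}, the marginal of $X_t$ evolves exactly as the chain started at $x$, so $X_T \sim p_x^T$. Likewise $Y_t$ is marginally a copy of the chain, and since $\overline{\pi}$ is stationary, $Y_T \sim \overline{\pi}$.

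The central step is the coupling inequality: for any two random variables on the same probability space, the variation distance between their laws is at most $\Pr[X_T\neq Y_T]$. To see this, take any $A\subseteq S$ and write
\begin{align*}
\Pr[X_T\in A]-\Pr[Y_T\in A] &= \Pr[X_T\in A, X_T\neq Y_T]-\Pr[Y_T\in A, X_T\neq Y_T]\\
&\leq \Pr[X_T\neq Y_T],
\end{align*}
where the terms on the event $X_T=Y_T$ cancel. Taking the maximum over $A$ gives $\|p_x^T-\overline{\pi}\|\le \Pr[X_T\neq Y_T]$.

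Next, average the hypothesis over the random start of $Y_0$:
\begin{align*}
\Pr[X_T\neq Y_T]=\sum_y \pi_y \Pr[X_T\neq Y_T\mid X_0=x,Y_0=y]\le \epsilon.
\end{align*}
Hence $\Delta_x(T)\le\epsilon$ for every $x$, so $\Delta(T)\le\epsilon$. By the definitions of $\tau_x(\epsilon)$ and $\tau(\epsilon)$ as minima and maxima, this yields $\tau(\epsilon)\le T$.

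There are two places needing care. First, we must justify that a random initial $Y_0$ is allowed. The coupling is defined via transition probabilities only, so the initial state can be mixed without affecting the marginal dynamics. Second, the variation distance convention must be the max-over-sets version, or equivalently half the $\ell_1$ distance. No monotonicity of $\Delta$ in $t$ is needed, since $\tau_x$ is defined as the minimum such $t$, and $T$ is one such $t$.
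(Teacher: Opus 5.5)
Your proof is correct. It is essentially the standard argument: the paper gives no proof of its own and cites \cite[Lemma 11.2]{Mitzenmacher2017ProbComp}, whose proof likewise starts $Y_0$ from $\overline{\pi}$, bounds $|\Pr[X_T\in A]-\overline{\pi}(A)|$ by $\Pr[X_T\neq Y_T]\le\epsilon$ for every $A\subseteq S$, and concludes $\Delta_x(T)\le\epsilon$ for all $x$. Your explicit averaging of the hypothesis over $Y_0\sim\overline{\pi}$ and your remark that no monotonicity of $\Delta$ is needed spell out steps the reference leaves implicit.
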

\section{Boosting of Plackett-Luce}
\subsection{Preliminaries}
\begin{theorem}[Hoeffdings inequality]
    Let $Z_1, \dots, Z_n$ be independent random variables on $\mathbb{R}$ such that $a_i \leq Z_i \leq b_i$ with probability one. If $S_n = \sum_{i=1}^n Z_i$ then for all $t > 0$

    \begin{align*}
        \Prob{ |S_n - \mathbb{E}[S_n] | \geq t} \leq \exp\left(-2t^2 / \sum(b_i-a_i)^2\right).
    \end{align*}
\end{theorem}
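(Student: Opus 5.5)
The plan is the standard Chernoff--Cram\'er argument combined with Hoeffding's lemma. One caveat first. For the two-sided event $|S_n-\mathbb{E}[S_n]|\geq t$, the argument yields the bound $2\exp(-2t^2/\sum(b_i-a_i)^2)$. Without the factor $2$, the displayed statement can fail, for example for a single fair $\pm 1$ variable with small $t$. The proof of Lemma~\ref{Lem:TwoArmsWinsLowerBound} only uses the one-sided lower tail $\Pr[S_n-\mathbb{E}[S_n]\leq -t]\leq \exp(-2t^2/\sum(b_i-a_i)^2)$. I would therefore prove each one-sided bound without the factor $2$ and obtain the two-sided bound, with the factor $2$, by a union bound.

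\textbf{Step 1 (Hoeffding's lemma).} Let $Z$ satisfy $a\leq Z\leq b$ and $\mathbb{E}[Z]=0$. The goal is $\mathbb{E}[e^{sZ}]\leq \exp(s^2(b-a)^2/8)$ for all $s\in\mathbb{R}$.
\begin{itemize}
\item By convexity of $z\mapsto e^{sz}$ on $[a,b]$, we have $e^{sZ}\leq \frac{b-Z}{b-a}e^{sa}+\frac{Z-a}{b-a}e^{sb}$.
\item Taking expectations and using $\mathbb{E}[Z]=0$ gives $\mathbb{E}[e^{sZ}]\leq e^{L(h)}$. Here $h=s(b-a)$, $\theta=-a/(b-a)\in[0,1]$, and $L(h)=-\theta h+\ln(1-\theta+\theta e^{h})$.
\item One checks $L(0)=L'(0)=0$.
\item $L''(h)=q(1-q)\leq 1/4$, where $q=\theta e^h/(1-\theta+\theta e^h)$.
\item Taylor's theorem with remainder then gives $L(h)\leq h^2/8$.
\end{itemize}
This step is the main obstacle: it is the only genuinely non-routine computation. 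I would carry it out by writing $L''$ as the variance of a Bernoulli variable to get the bound $1/4$ cleanly.

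\textbf{Step 2 (exponential Markov and independence).} For $s>0$, apply Markov's inequality to $e^{s(S_n-\mathbb{E}[S_n])}$. Independence factorizes the moment generating function, and Step~1 applies to each centered $Z_i-\mathbb{E}[Z_i]$, which lies in an interval of length $b_i-a_i$. This gives
\begin{align*}
\Pr[S_n-\mathbb{E}[S_n]\geq t]\leq \exp\Big(-st+\tfrac{s^2}{8}\textstyle\sum_i (b_i-a_i)^2\Big).
\end{align*}

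\textbf{Step 3 (optimize and symmetrize).}
\begin{itemize}
\item Choosing $s=4t/\sum_i(b_i-a_i)^2$ yields the upper-tail bound $\exp(-2t^2/\sum_i(b_i-a_i)^2)$.
\item Applying the same argument to $-Z_i\in[-b_i,-a_i]$ gives the identical lower-tail bound.
\item A union bound over the two tails gives the two-sided statement, with the factor $2$ noted above.
\end{itemize}
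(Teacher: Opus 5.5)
Your proposal is correct. It is the standard Cram\'er--Chernoff argument via Hoeffding's lemma. The paper does not prove this statement at all; it quotes Hoeffding's inequality as a known tool in the appendix preliminaries, so there is no competing route to compare against.

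Your caveat is right and worth keeping. As displayed, the two-sided bound is false: for a single fair $\pm 1$ variable and $t=1$, the left side is $1$ while the right side is $e^{-1/2}$. The correct two-sided form carries a factor $2$.

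The only place the paper invokes the inequality is the proof of Lemma~\ref{Lem:TwoArmsWinsLowerBound}. There it uses only the one-sided lower tail, with $Z_i\in[0,1]$, $\sum_i(b_i-a_i)^2=x$ and $t=xp-\frac{x+1}{2}>0$. Your Steps~2--3 deliver exactly this with the stated constant.

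Two small points to tidy up in Step~1:
\begin{itemize}
\item The claim $\theta=-a/(b-a)\in[0,1]$ needs $a\le 0\le b$. This follows from $\mathbb{E}[Z]=0$, but you should say so.
\item Defining $\theta$ requires $a<b$. A degenerate $a_i=b_i$ (a constant $Z_i$, whose centered moment generating function is identically $1$) should be handled separately.
\end{itemize}
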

\subsection{Proofs of Section \ref{seq:PLBoosting2Arms}}\label{sec:ProofsSection6.1}
\begin{proof}[Proof of Theorem \ref{Thm:NecessaryNumberDuelsTwoArms}]
    If we set the right-hand sight of Lemma \ref{Lem:TwoArmsWinsLowerBound} equal to a desired success probability $1- \epsilon$, we can solve it for $x$ and derive the sufficient number of duels to achieve the desired success probability. 
    %
    \begin{align*}
        &1 - \frac{1}{\exp\left(\frac{x(u_j-u_i)^2}{2(u_i+u_j)^2}\right)} \geq 1- \epsilon \\
        \Leftrightarrow ~~ &\frac{1}{\exp\left(\frac{x(u_j-u_i)^2}{2(u_i+u_j)^2}\right)} \leq \epsilon \\
        \Leftrightarrow ~~ &x \geq \frac{2(u_i+u_j)^2}{(u_j-u_i)^2} \ln\left(\frac{1}{\epsilon}\right)
    \end{align*}
\end{proof}
\begin{proof}[Proof of Corollary \ref{cor:BoostingStationaryDis2Arms}]
    According to Theorem 7.10 in \cite{Mitzenmacher2017ProbComp}, the unique stationary distribution $\bar{\pi}$ is characterized by 
    \begin{align*}
        \pi_1 + \pi_2 &= 1 \\
        \frac{\pi_1}{\pi_2} &= \frac{P_{2,1}}{P_{1,2}},
    \end{align*}
    where $P_{i,j} = \frac{1}{2}\frac{u_j}{u_1+u_2}$ are the transition probabilities of the induced Markov Chain from the given Plackett-Luce Model. By using Lemma \ref{Lem:TwoArmsWinsLowerBound} and Corollary \ref{Cor:TwoArmsWinsUpperBound}, we can bound this fraction by
    \begin{align*}
        \frac{\pi_1}{\pi_2} &\geq \frac{\exp\left(\frac{x(u_2-u_1)^2}{2(u_1+u_2)^2}\right) - 1}{\exp\left(\frac{x(u_2-u_1)^2}{2(u_1+u_2)^2}\right)}\cdot\exp\left(\frac{x(u_1-u_2)^2}{2(u_1+u_2)^2}\right) \\
        &= \exp\left(\frac{x(u_2-u_1)^2}{2(u_1+u_2)^2}\right) -1,
    \end{align*}
    since $(u_1-u_2)^2 = (u_2-u_1)^2$. We want this lower bound of the fraction of the stationary distribution to be less than $\gamma$ and get by simple transformations
    \begin{align*}
        \exp\left(\frac{x(u_2-u_1)^2}{2(u_1+u_2)^2}\right) -1 &> \gamma \\
        \Leftrightarrow ~~~\frac{x(u_2-u_1)^2}{2(u_1+u_2)^2} &> \ln(\gamma +1) \\
        \Leftrightarrow ~~~ x &> \frac{2(u_1+u_2)^2}{(u_2-u_1)^2}\ln(\gamma+1).
    \end{align*}
\end{proof}
\begin{proof}[Proof of Example \ref{ex:BestOf3Duels}]
    If we consider the arm duels as a Markov Chain with transition probabilities induced by the Plackett-Luce Model, the fraction of the stationary distribution matches the fraction of the utilities
    \begin{align*}
        \frac{\pi_1}{\pi_2} = \frac{u_1}{2(u_1+u_2)}\cdot\frac{2(u_1+u_2)}{u_2} = \frac{u_1}{u_2}
    \end{align*}
    If we play a duel multiple times we have the winning probabilities
    \begin{align*}
        &\Prob{1\text{ wins against }2 \geq 2 \text{ times}} \\
        &=3\left(\left(\frac{u_1}{u_1+u_2}\right)^2\left(\frac{u_2}{u_1+u_2}\right)\right) + \left(\frac{u_1}{u_1+u_2}\right)^3 \\
        &= \frac{3u_1^2u_2 + u_1^3}{(u_1+u_2)^3}\\
        &= \frac{3u_1^2u_2 + u_1^3}{u_1^3 + 3u_1^2u_2 + 3u_1u_2^2 + u_2^3}.
    \end{align*}
    Thus, we obtain as stationary distribution
    \begin{align*}
        \frac{\pi_1}{\pi_2} &= \frac{3u_1^2u_2 + u_1^3}{3u_1u_2^2 + u_2^3}.
    \end{align*}
    We can derive that we boost the probabilities if
    \begin{align*}
        &\frac{u_1}{u_2} < \frac{3u_1^2u_2 + u_1^3}{3u_1u_2^2 + u_2^3} \\
        \Leftrightarrow ~~~& 3u_1^2u_2^2 + u_1u_2^3 < 3u_1^2u_2^2 + u_1^3u_2 \\
        \Leftrightarrow ~~~& u_2^2 < u_1^2 \\
        \Leftrightarrow ~~~& u_2 < u_1.
    \end{align*}
\end{proof}
\subsection{Plots of results of Section \ref{seq:PLBoosting2Arms}}
To get an intuition about the theoretically derived probabilities in section \ref{sec:PLBoost}, we show some plots of example Plackett-Luce Models and the corresponding winning probabilities of each arm for a specific number of duels in the following.
%
\begin{figure}[H]
    \centering
    \label{fig:winprobs2arms}
    \includegraphics[width=0.6\columnwidth]{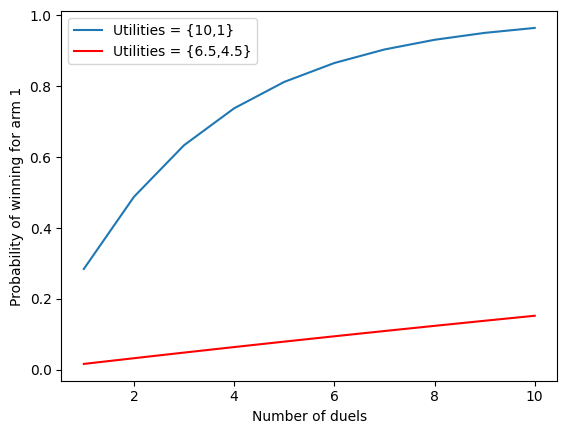}
    \caption{Lower bound for the probability that arm $1$ is in expectation the winner over arm $2$ after up to $10$ duels for different instantiations of the underlying Plackett-Luce Model.}
    \Description[Lower bound for the probability that arm $1$ is in expectation the winner over arm $2$ after up to $10$ duels increases for larger number of duels.]{Lower bound for the probability that arm $1$ is in expectation the winner over arm $2$ after up to $10$ duels increases for larger number of duels. Higher bound for larger difference in the utilities.}
\end{figure}
\begin{figure}[H]
    \centering
    \label{fig:atmostwinprobs}
    \includegraphics[width=0.6\columnwidth]{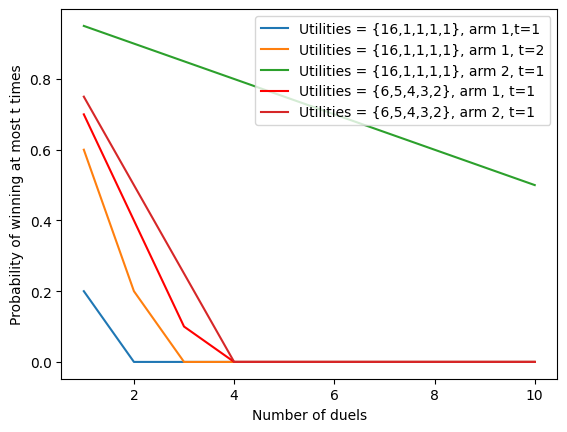}
    \caption{Lower bound for the probability that a specific arm wins in expectation at most $t$ times over all other arms at after up to $10$ duels for different instantiations of the underlying Plackett-Luce Model.}
    \Description[Lower bound for the probability that a specific arm wins in expectation at most $t$ times over all other arms at after up to $10$ duels decreases for larger number of duels.]{Lower bound for the probability that a specific arm wins in expectation at most $t$ times over all other arms at after up to $10$ duels decreases for larger number of duels.}
\end{figure}
\begin{figure}[H]
    \centering
    \label{fig:atleastwinprobs}
    \includegraphics[width=0.6\columnwidth]{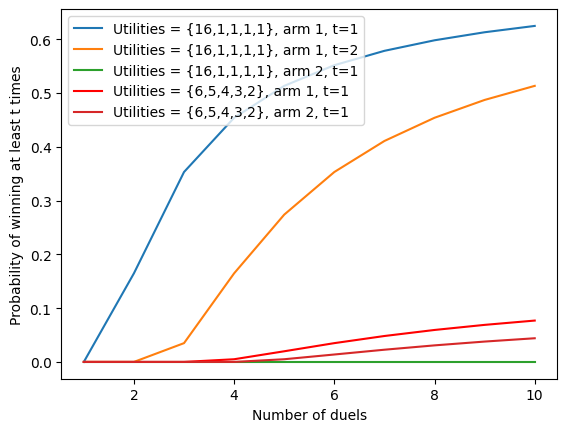}
    \caption{Lower bound for the probability that a specific arm wins in expectation at least $t$ times over all other arms at after up to $10$ duels for different instantiations of the underlying Plackett-Luce Model.}
    \Description[Lower bound for the probability that a specific arm wins in expectation at least $t$ times over all other arms at after up to $10$ duels increases for larger number of duels.]{Lower bound for the probability that a specific arm wins in expectation at least $t$ times over all other arms at after up to $10$ duels increases for larger number of duels. Higher bound for larger difference in the utilities.}
\end{figure}
%

\subsection{Proof of Section \ref{seq:PLBoostingNArms}}\label{sec:ProofsSection6.2}
\begin{proof}[Proof of Theorem \ref{thm:comparisonNarms}] Probability that arm $i$ in set $S$ is identified as the best arm in $x \in \mathbb{N}$ duels is given by
\begin{align*}
    \text{Pr}[i &\text{ best arm in $x$ duels}] \\
    =& \sum_{d = \left\lfloor\frac{x}{|S|}\right\rfloor + 1}^{\left\lfloor\frac{x}{2}\right\rfloor} \Big[ \Prob{ i \text{ wins } = d \text{ times}} \\
    &~~\cdot \prod_{j \in S\backslash \{i\}} \Prob{ j \text{ wins } \leq d - 1 \text{ times} ~|~ i \text{ wins } d \text{ times}}\Big] \\
    &~~+ \Prob{ i \text{ wins } \geq \left\lfloor\frac{x}{2}\right\rfloor +1 \text{ times}}\\
    =&\sum_{d = \left\lfloor\frac{x}{|S|}\right\rfloor + 1}^{\left\lfloor\frac{x}{2}\right\rfloor} \Big[ \Prob{ i \text{ wins } = d \text{ times}} \\
    &~~\cdot \prod_{j \in S\backslash \{i\}} \frac{\Prob{ j \text{ wins } \leq d - 1 \text{ times} ~\cap~ i \text{ wins } d \text{ times}}}{\Prob{ i\text{ wins } d \text{ times }}}\Big] \\
    &~~+ \Prob{ i \text{ wins } \geq \left\lfloor\frac{x}{2}\right\rfloor +1 \text{ times}}.
\end{align*}
The probability that arm $i$ wins \textit{exactly} $d$ times is fixed as
\begin{align*}
        \text{Pr}[ i \text{ wins in S } &= d \text{ of } x \text{ times}]\\
        &= \binom{x}{d}\left(\frac{u_i}{\sum_{j \in S} u_j}\right)^d \left(\frac{\sum_{j \in S}u_j - u_i}{\sum_{j \in S} u_j}\right)^{x-d}.
    \end{align*}
In addition, we can use the distribution function of the multinomial distribution to derive
\begin{align*}
    &\frac{\Prob{ j \text{ wins } \leq d - 1 \text{ times} ~\cap~ i \text{ wins } d \text{ times}}}{\Prob{ i\text{ wins } d \text{ times }}} \\
    &~~= \frac{\sum_{k=0}^{d-1}\binom{x}{d}\left(\frac{u_i}{\sum_{l \in S}u_l}\right)^d \binom{x-d}{k}\left(\frac{u_j}{\sum_{l \in S}u_l}\right)^k \left(\frac{\sum_{l \in S}u_l - u_i - u_j}{\sum_{l \in S}u_l} \right)^{x-d-k}}{\binom{x}{d}\left(\frac{u_i}{\sum_{l \in S}u_l}\right)^d \left(\frac{\sum_{l \in S}u_l - u_i}{\sum_{l \in S}u_l}\right)^{x-d}} \\
    &~~= \frac{\sum_{k=0}^{d-1}\binom{x-d}{k}\left(\frac{u_j}{\sum_{l \in S}u_k}\right)^k \left(\frac{\sum_{l \in S}u_l-u_j}{\sum_{l \in S}u_l}\right)^{x-d-k}\overbrace{\left(\frac{\sum_{l \in S}u_l-u_i-u_j}{\sum_{l \in S}u_l- u_j}\right)^{x-d-k}}^{(*)}}{\left(\frac{\sum_{l \in S}u_l - u_i}{\sum_{l \in S}u_l}\right)^{x-d}}.
\end{align*}
The next step is to lower and upper bound all of the terms contained in the above equations. First of all, we can estimate
\begin{align*}
    (*) &\leq \left(\frac{\sum_{l \in S}u_l-u_i-u_j}{\sum_{l \in S}u_l- u_j}\right)^{x-2d+1} \text{ and}\\
    (*) &\geq \left(\frac{\sum_{l \in S}u_l-u_i-u_j}{\sum_{l \in S}u_l- u_j}\right)^{x-d}.
\end{align*}
Using as tail bounds for the binomial distribution the equations (2) and (5) from \cite{TailBounds} and the fact that for the cumulative distribution function $F(d,x,p) := \Prob{ Y \sim Bin(x,p) \geq d}$, we have $\Prob{Y \leq d} = F(x-d, x, 1-p)$, we get the following upper and lower bounds.
\begin{align*}
    \Prob{ Y \leq d} &\leq \exp\left( -x KL\left(\frac{d}{x} \| \frac{u_i}{\sum_{j \in S} u_j}\right)\right), \\
    \Prob{ Y \geq d} &\leq \exp\left( -x KL\left(\frac{x-d}{x} \| \frac{\sum_{j \in S\backslash\{i\}} u_j}{\sum_{j \in S} u_j}\right)\right), \\
    \Prob{ Y \leq d} &\geq \frac{1}{\sqrt{8d(1-\frac{d}{x})}} \exp\left( -x KL\left(\frac{d}{x} \| \frac{u_i}{\sum_{j \in S} u_j}\right)\right), \\
    \Prob{ Y \geq d} &\geq \frac{1}{\sqrt{8(x-d)(1-\frac{x-d}{x})}}‚ \exp\left( -x KL\left(\frac{x-d}{x} \| \frac{\sum_{j \in S\backslash\{i\}} u_j}{\sum_{j \in S} u_j}\right)\right).
\end{align*}
Replacing the bounds in the probability that arm $i$ is identified as the winner in $x$ duels gives us the stated result.
\end{proof}

\end{document}